\documentclass[twoside]{article}

%
\usepackage[accepted]{aistats2022}
%


\setlength{\pdfpageheight}{11in}
\setlength{\pdfpagewidth}{8.5in}


\usepackage[colorlinks,
            linkcolor=red,
            anchorcolor=blue,
            citecolor=blue
            ]{hyperref}     
\usepackage{xcolor}     

\usepackage{smile}  
\allowdisplaybreaks
\newcommand{\la}{\langle}
\newcommand{\ra}{\rangle}
\newcommand{\qvalue}{Q}
\newcommand{\vvalue}{V}

\newcommand{\reward}{r}

\def \algname {\text{POWERS}}

\usepackage{enumitem}
\usepackage{wrapfig}
\usepackage{smile}


\begin{document}
\twocolumn[

\aistatstitle{Near-optimal Policy Optimization Algorithms for Learning Adversarial Linear Mixture MDPs}

\aistatsauthor{ Jiafan He  \And Dongruo Zhou \And  Quanquan Gu }

\aistatsaddress{ UCLA \And  UCLA \And  UCLA } ]

\begin{abstract}

  Learning Markov decision processes (MDPs) in the presence of the adversary is a challenging problem in reinforcement learning (RL). In this paper, we study RL in episodic MDPs with adversarial reward and full information feedback, where the unknown transition probability function is a linear function of a given feature mapping, and the reward function can change arbitrarily episode by episode. We propose an optimistic policy optimization algorithm POWERS and show that it can achieve $\tilde{O}(dH\sqrt{T})$ regret, where $H$ is the length of the episode, $T$ is the number of interactions with the MDP, and $d$ is the dimension of the feature mapping. Furthermore, we also prove a matching lower bound of $\tilde{\Omega}(dH\sqrt{T})$ up to logarithmic factors. Our key technical contributions are two-fold: (1) a new value function estimator based on importance weighting; and (2) a tighter confidence set for the transition kernel. They together lead to the nearly minimax optimal regret.
\end{abstract}

\section{INTRODUCTION}

The goal of reinforcement learning (RL) is to design a policy to maximize the reward through observation from interaction with the unknown environment. In reinforcement learning, the Markov decision process (MDP) \citep{puterman1994markov} is a typical model to describe the unknown environment and widely used to analyze the sequential dynamic environment. In this work, we consider episodic MDPs with a finite horizon. Traditional MDPs often assume the unknown transition probability function is fixed and the reward function is stochastic, which means the reward of each state-action pair follows an unknown stationary distribution. Yet, in many real world models, the reward function is not fixed and may change over time. 
In order to capture the changed or even adversarial reward, 
\citet{even2009online} first introduced the concept of adversarial MDP model and proposed MDP-Expert (MDP-E) algorithm, which attains $\tilde O{(\tau^2\sqrt{T})}$ regret with $\tau$ being the
mixing time of the MDP, for known transition probability function and full information of the reward function. In a concurrent work, \citet{yu2009markov} proposed an algorithm in the same setting and obtained $\tilde O{(T^{2/3})}$ regret. There is a line of follow up work studying RL for adversarial MDPs \citep{neu2010online,neu2012adversarial,zimin2013online,dekel2013better,rosenberg2019online,efroni2020optimistic}, which studies various settings depending on whether the transition probability function is known, and whether the feedback is full-information or bandit. Please see the related work section for a more detailed discussion. 

However, most existing works on adversarial MDP are in the tabular MDP setting, where both the number of actions and states are finite, and the action-value function is represented by a table. In many real-world RL problems, the state and action spaces are large or even infinite. A widely used method to overcome the curse of large state and action spaces is function approximation, which reparameterizes the tabular action-value function as a function over some feature mapping that maps the state and action to a low-dimensional space. Learning adversarial MDPs with linear function approximation is still understudied. 
Some existing works \citep{tamar2014scaling, zhang2020stability, wang2021online} study learning optimal policies for robust MDPs with function approximation under specific MDP assumptions, which are not directly applicable to general adversarial MDPs. A notable existing work is \citet{cai2020provably}, which studies general adversarial MDPs with linear function approximation. 
In particular, \citet{cai2020provably} proposed an optimistic variant of proximal policy optimization algorithm for the linear kernel MDP \citep{jia2020model,ayoub2020model,zhou2020provably} with unknown transition probability and full reward information in the adversarial setting, which achieves $\tilde{O}({\sqrt{d^2H^3T}})$ regret. Here $H$ is the length of the episode, $T$ is the number of interactions with the MDP and $d$ is the dimension of the feature mapping. 


In this paper, we seek a computationally efficient and statistically optimal algorithm for learning adversarial MDPs. The focus of this work is the unknown transition and full information setting. 
We first propose an algorithm called optimistic \textbf{P}olicy \textbf{O}ptimization \textbf{W}ith B\textbf{ER}nstein bonu\textbf{S} (\algname) for adversarial linear mixture MDP (See Definition for more details) with full information feedback. At a high level, our algorithm $\algname$ is similar to Optimistic-PPO (OPPO) algorithm \citep{cai2020provably}, which can also be seen as an extension of MDP-Expert (MDP-E) with linear function. More specifically, $\algname$ consists of two main steps in each round: (1) one-step least-square temporal difference (LSTD) learning along with exploration bonus for policy evaluation; and (2) mirror descent on the policy space for policy improvement. Our key algorithmic contributions include a weighted LSTD algorithm which takes into the variance of the Bellman residue into account, and a Bernstein-type bonus for exploration based on the principle of ``optimism-in-the-face-of-uncertainty'' \citep{abbasi2011improved}. We prove that $\algname$ achieves  
$\tilde{O}(dH\sqrt{T})$
regret with high probability, where $H$ is the length of the episode, $T$ is the number of interactions with the MDP and $d$ is the dimension of the feature mapping. We also prove an $\tilde{\Omega}(dH\sqrt{T})$ lower bound for adversarially learning linear kernel MDPs.
Our upper bound matches the lower bound up to logarithmic factors, which suggests that our algorithm is nearly minimax optimal. To the best of our knowledge, our algorithm is the first computationally efficient and statistical (nearly) optimal algorithm for adversarial MDPs in the unknown transition and full reward information setting.



\noindent\textbf{Notation.} We use lower case letters to denote scalars, and use lower and upper case boldface letters to denote vectors and matrices respectively.  For a vector $\xb\in \RR^d$ and matrix $\bSigma\in \RR^{d\times d}$, we denote by $\|\xb\|_2$ the Euclidean norm, $\|\xb\|_1 = \sum_{i=1}^d|x_i|$, and $\|\xb\|_{\bSigma}=\sqrt{\xb^\top\bSigma\xb}$. For two sequences $\{a_n\}$ and $\{b_n\}$, we write $a_n=O(b_n)$ if there exists an absolute constant $C$ such that $a_n\leq Cb_n$, and we write $a_n=\Omega(b_n)$ if there exists an absolute constant $C$ such that $a_n\geq Cb_n$. We use $\tilde O(\cdot)$ and $\tilde \Omega(\cdot)$ to further hide the logarithmic factors. For any $a \leq b \in \RR$, $x \in \RR$, let $[x]_{[a,b]}$ denote $a\cdot \ind(x \leq a) + x \cdot \ind (a \leq x \leq b) + b \cdot \ind (b \leq x)$, where $\ind(\cdot)$ is the indicator function. For a positive integer $n$, we use $[n]=\{1,2,..,n\}$ to denote the set of integers from $1$ to $n$. 

\section{RELATED WORK}
\label{sec:related}

\noindent\textbf{RL with adversarial reward.}
There is a long line of research on learning adversarial MDPs, where the reward function is adversarially chosen at the beginning of each episode and can change arbitrarily across different episodes \citep{even2009online,yu2009markov,gergely2010online,neu2010online,zimin2013online,neu2012adversarial,rosenberg2019online,rosenberg2019online1,wang2019optimism,cai2020provably,efroni2020optimistic}. The seminal works by \citet{even2009online,yu2009markov} are in the known transition probability and full reward information setting. In the known transition and bandit feedback on the reward setting, \citet{gergely2010online} proposed MDP-EXP3 algorithm and obtained $\tilde O{(T^{2/3})}$ regret. \citet{neu2010online} proposed Bandit O-SSP algorithm which achieves $\tilde O{(\sqrt{T}/\alpha)}$ regret with an addition assumption that all states are reachable with probability $\alpha>0$ for any policy. \citet{zimin2013online} further proposed O-REPS algorithm, which improves the regret from $\tilde O{(T^{2/3})}$ to $\tilde O{(\sqrt{T})}$ without any additional assumption.
In the unknown transition but full reward information setting, \citet{neu2012adversarial} proposed FPOP algorithm that achieves $\tilde O{(SA\sqrt{T})}$ regret. \citet{rosenberg2019online} proposed UC-O-REP algorithm and improved the regret to $\tilde O{(S\sqrt{AT})}$.
In the most challenging unknown transition and bandit reward feedback setting, \citet{rosenberg2019online1} proposed Shifted Bandit UC-O-REPS algorithm which achieves $\tilde O{(T^{3/4})}$ regret. \citet{rosenberg2019online1} also proposed Bounded Bandit UC-O-REPS algorithm and obtained $\tilde O{(\sqrt{T}/\alpha)}$ regret under the assumption that all states are reachable with probability $\alpha>0$ for any policy. \citet{jin2020learning} proposed UOB-REPS algorithm that achieves $\tilde O{(\sqrt{T})}$ regret without the additional assumption made by \citet{rosenberg2019online1}. 
The focus of this paper is the unknown transition but full reward information setting. 

\noindent\textbf{RL with linear function approximation.}
Recently, there emerges a large body of literature on solving MDP with linear function approximation. 
These works can be generally divided into three lines based on the specific assumption on the underlying MDP. The first line of work \citep{sun2019model,du2019provably1} is based on the low Bellman rank assumption \citep{jiang2017contextual}, which assumes a low-rank factorization of the Bellman error matrix. The second line of work \citep{wang2019optimism,he2020logarithmic,zanette2020frequentist} focuses on the linear MDP \citep{yang2019sample,jin2019provably}, where the transition probability function and reward function are parameterized as a linear function of a feature mapping $\bphi: \cS\times \cA \rightarrow \RR^d$. 
Later, \citet{zanette2020learning} made a weaker assumption called low inherent Bellman error and proposed Eleanor algorithm. 
The last line of work \citep{cai2020provably,yang2019reinforcement,he2020logarithmic,modi2019sample,zhou2020nearly} is based on the linear mixture/kernel MDP \citep{jia2020model,ayoub2020model,zhou2020provably,wu2021nearly}, where the transition probability function can be parameterized as a linear function of a feature mapping $\bphi: \cS  \times \cA  \times \cS\rightarrow \RR^d$. Note that none of the above work with linear function approximation can handle adversarially chosen reward with \citet{cai2020provably} being a notable exception. Our paper also considers the linear kernel MDP but with an adversarial reward function. 

\noindent\textbf{RL with policy gradient.}
Our work is also related to policy optimization and policy gradient methods \citep{williams1992simple,baxter2000direct,sutton1999policy,kakade2001natural,kakade2002approximately,kakade2003sample,bagnell2003covariant,schulman2015trust,schulman2017proximal,abbasi2019politex,abbasi2019exploration,cai2020provably,hao2020provably,efroni2020optimistic}, 
among which the most related works to ours are trust-region policy optimization \citep{schulman2015trust}, proximal policy optimization \citep{schulman2017proximal}, Politex \citep{abbasi2019politex}, EE-Politex \citep{abbasi2019exploration}, AAPI \citep{hao2020provably} and OPPO \citep{cai2020provably}. 
More specifically, \citet{cai2020provably} proposed the optimistic variant of the Proximal Policy Optimization algorithm for adversarial linear kernel MDP, which can be seen as an extension of MDP-E. \citet{abbasi2019politex} proposed Politex algorithm with least-squares policy evaluation for infinite-horizon average-reward MDPs, which can be seen a generalization of the MDP-E \citep{even2009online}. In fact, MDP-E is equivalent to TRPO/PPO \citep{schulman2015trust,schulman2017proximal} as shown by \citet{neu2017unified}. Our algorithm can be seen as a nontrivial extension of OPPO and MDP-E.


\section{PRELIMINARIES}\label{section 3}
\textbf{Time-inhomogeneous, episodic adversarial MDPs.}
In this paper, we consider a time-inhomogeneous, episodic Markov decision process (MDP), which is denoted by a tuple $M=M(\cS, \cA, H, \{\reward_h^k\}_{h\in[H],k\in[K]}, \{\PP_h\}_{h=1}^{H})$. Here $\cS$ is the state space, $\cA$ is the action space, $H$ is the length of the episode, $\reward_h^k: \cS \times \cA \rightarrow [0,1]$ is the deterministic reward function at stage $h$ of the $k$-th episode. 
$\PP_h(s'|s,a) $ is the transition probability function which denotes the probability for state $s$ to transfer to state $s'$ given action $a$ at stage $h$. For simplicity, we assume the reward function $\reward_h^k$ is adversarially chosen by the environment
at the beginning of the $k$-th episode and \emph{known after the episode $k$}. A policy $\pi=\{\pi_h\}_{h=1}^H$ is a collection of functions $\pi_h$, where each $\pi_h: \cS \rightarrow \Delta(\cA)$ is a function which maps a state $s$ to distributions over action set $\cA$ at stage $h$.
For each state-action pair $(s,a)\in \cS\times \cA$, we denote the action-value function $\qvalue_{k,h}^{\pi}$ and the value function $\vvalue_{k,h}^{\pi}$ as follows: 
\begin{align}
\qvalue_{k,h}^{\pi}(s,a) &= \reward_{h}^{k}(s,a)\notag\\
    & + \EE\bigg[\textstyle{\sum_{h' = h+1}^H} \reward_{h'}^{k}(s_{h'}, a_{h'})\bigg|s_{h} = s, a_{h} = a\bigg],\notag\\
\vvalue_{k,h}^{\pi}(s) &= \EE_{a\sim \pi_{h}(\cdot|s)}\big[\qvalue_{k,h}^{\pi}(s, a)\big],\vvalue_{k,H+1}^{\pi}(s)=0.\notag
\end{align}
In the definition of $\qvalue_{k,h}^{\pi}$, we denote by $\EE[\cdot]$ the expectation over the state-action sequences
 $(s_h,a_h,s_{h+1},a_{h+1},..,s_H,a_H)$, where $s_h=s,a_h=a$ and $s_{h'+1} \sim \PP_h(\cdot| s_{h'}, a_{h'}),\ a_{h'+1} \sim \pi_{h'+1}(\cdot|s_{h'+1})(h'=h,h+1,..,H-1)$. For simplicity, for any function $\vvalue: \cS \rightarrow \RR$, we denote   
 \begin{align}
 [\PP_h\vvalue](s,a)&=\EE_{s' \sim \PP_h(\cdot|s,a)}\vvalue(s'),\notag\\
  [\VV _h\vvalue](s,a)&=[\PP_h V^2](s,a)-\big([\PP_h V](s,a)\big)^2,\label{eq:variance}
\end{align}
where $\vvalue^2$ is a shorthand for the function whose value at state $s$ is $\big(\vvalue(s)\big)^2$.
 Using this notation, for policy $\pi$, we have the following Bellman equality $\qvalue_{k,h}^{\pi}(s,a) = \reward_h^k(s,a) + [\PP_h\vvalue_{k,h+1}^{\pi}](s,a)$.
 
In the \emph{online learning setting}, for each episode $k\ge 1$, at the beginning of the episode $k$, the agent determines a policy $\pi^k$ to be followed in this episode and we assume that the initial state $s_1^{k}$ is fixed across all episodes\footnote{While we study fixed initial state, our results readily can generalize to the case that the initial state $s_1^{k}$ is random chosen from a fixed distribution across all episodes. }. At each stage $h\in[H]$, the agent observe the state $s_h^k$, choose an action following the policy $a_h^k \sim \pi_h^k(\cdot|s_h^k)$ and observe the next state with $s_{h+1}^k \sim \PP_h(\cdot|s_h^k,a_h^k)$. For the adversarial environment, we focus on the expected regret, which is the expected loss of the
algorithm relative to the best-fixed policy in hindsight \citep{cesa2006prediction}:   
\begin{align}
    \text{Regret}(M,K)=\sup_{\pi}\textstyle{\sum_{k=1}^K}\big(\vvalue_{k,1}^{\pi}(s_1^k)-\vvalue_{k,1}^{\pi^k}(s_1^k)\big).\notag
\end{align}
For simplicity, we denote the optimal policy $\pi^*$ as $\pi^*=\sup_{\pi}\sum_{k=1}^K\vvalue_{k,1}^{\pi}(s_1^k)$. Therefore, we have the following Bellman optimally equation $ \qvalue_{k,h}^*(s,a) = \reward_h^k(s,a) + [\PP_h\vvalue_{k,h}^*](s,a)$,
where $\qvalue_{k,h}^*(s,a),\vvalue_{k,h}^*(s,a)$ are the corresponding optimal action-value function and value function. For any two policies $\pi$ and $\pi'$, we define the Kullback–Leibler divergence between them as follows $D_{KL}\big(\pi\|\pi'\big)=\sum_{a\in\cA}\pi(a)\log\big({\pi(a)}/{\pi'(a)}\big)$.

\noindent \textbf{Linear Mixture MDPs.} In this work, we focus on a special class of MDPs called \emph{linear mixture MDPs} \citep{jia2020model, ayoub2020model,zhou2020provably}, where the transition probability function is a linear function of a given feature mapping $\bphi: \cS  \times \cA  \times \cS \rightarrow \RR^d$. The formal definition of a linear kernel MDP is as follows:

\begin{definition}\label{assumption-linear}
$M(\cS, \cA, H, \{\reward_h^k\}_{h\in[H],k\in[K]}, \{\PP_h\}_{h=1}^{H})$ is called a inhomogenous, episode $B$-bounded linear mixture MDP if there exist a \emph{known} feature mapping $\bphi(s'|s,a): \cS \times \cA \times \cS \rightarrow \RR^d$ and an \emph{unknown} vector $\btheta_h \in \RR^d$ with $\|\btheta\|_2 \leq B$ \footnote{For any MDP $M$, parameter $B$ cannot be arbitrarily  small.  More specifically, parameter $B$ should satisfy $B \ge 1$.}, such that 
\begin{itemize} [leftmargin = *]
    \item For any state-action-next-state triplet $(s,a,s') \in \cS \times \cA \times \cS$, we have $\PP_h(s'|s,a) = \la \bphi(s'|s,a), \btheta_h\ra$;
    \item For any bounded function $\vvalue: \cS \rightarrow [0,1]$ and any tuple $(s,a)\in \cS \times \cA$, we have $\|\bphi_{{\vvalue}}(s,a)\|_2 \leq 1$, where $\bphi_{{\vvalue}}(s,a) = \sum_{s'\in\cS}\bphi(s'|s,a)\vvalue(s') \in \RR^d$. 
\end{itemize}
\end{definition}

Based on Definition \ref{assumption-linear}, we can see that for any linear mixture MDP $M$ and function $\vvalue: \cS \rightarrow \RR$, we have the following properties:   
\begin{align}
    [\PP_h\vvalue](s,a)
    &=\textstyle{\sum_{s'\in \cS}}\PP_h(s'|s,a)\vvalue(s')\notag\\
    &=\textstyle{\sum_{s'\in \cS}}\la \bphi(s'|s,a), \btheta_h\ra\vvalue(s')\notag\\
    &=\la\bphi_V(s,a), \btheta_h\ra,\label{linear-mixture-property}
\end{align}
and   
\begin{align}
    [\VV _h\vvalue](s,a)
    & = \textstyle{\sum_{s'\in \cS}}\PP_h(s'|s,a)\vvalue^2(s') \notag\\
    &\qquad - \big[\textstyle{\sum_{s'\in \cS}}\PP_h(\cdot|s,a)\vvalue(s')\big]^2\notag \\
    & = \la \bphi_{V^2}(s,a), \btheta_h\ra - [\la \bphi_{V}(s,a), \btheta_h\ra]^2.\label{linear-mixture-property_2}
\end{align}
\eqref{linear-mixture-property} and \eqref{linear-mixture-property_2} suggest that both the conditional expectation and the variance of a function $V$ can be calculated based on certain linear functions of different feature mappings, i.e., $\bphi_V$ and $\bphi_{V^2}$. Therefore, we can estimate them by estimating the corresponding linear functions. 

\section{THE PROPOSED ALGORITHM}\label{SECTION: 4}
In this section, we propose an algorithm $\algname$ to learn the episodic linear mixture MDP (see Definition \ref{assumption-linear}) with adversarial rewards, which is illustrated in Algorithm \ref{algorithm}. At a high level, $\algname$ is an improved version of the Optimistic-PPO (OPPO) algorithm \citep{cai2020provably} with a refined estimate of the action-value function $\qvalue_{k,h}(s,a)$. The $\algname$ can be divided into two phases: (1) policy improvement phase and (2) policy evaluation phase.

\begin{algorithm*}[t]
	\caption{$\algname$}\label{algorithm}
	\begin{algorithmic}[1]
	\REQUIRE Regularization parameter $\lambda$, learning rate $\alpha$.
	\STATE Set initial policy $\big\{\pi_h^0(\cdot|\cdot)\big\}_{h=1}^{H}$ as uniform distribution on the action set $\cA$
	\STATE For $h\in[H+1]$, set the initial value functions $\qvalue_{0,h}(\cdot,\cdot)\leftarrow 0,\vvalue_{0,h}(\cdot)\leftarrow 0$
	\STATE For $h\in[H]$, set $\hat{\bSigma}_{1,h},\tilde{\bSigma}_{1,h}\leftarrow \lambda \Ib, \hat{\bbb}_{1,h},\tilde{\bbb}_{1,h}\leftarrow \zero,\hat{\btheta}_{1,h},\tilde{\btheta}_{1,h}\leftarrow \zero$
	\FOR{$k=1,\ldots, K$}
	    \STATE Receive state $s_1^k$\label{algorithm:line5}
	    \FOR{$h=1,\ldots, H$}
	    \STATE Update the policy by $\pi_h^k(\cdot|\cdot) \propto \pi_h^{k-1}(\cdot|\cdot) \exp\big\{\alpha\qvalue_{k-1,h}(\cdot,\cdot)\big\}$\label{algorithm:line7}
	    \STATE Take action $a_h^k \sim \pi_h^k(\cdot|s_h^k)$ and receive next state $s_{h+1}^k\sim \PP_h(\cdot|s_h^k,a_h^k)$ \label{algorithm:line8}
	    \STATE Observe the adversarial reward function $\reward_h^k(\cdot,\cdot)$ 
	    \ENDFOR \label{algorithm:line10}
	    \STATE Set $\vvalue_{k,H+1}(\cdot)\leftarrow 0$ \label{algorithm:line11}
	    \FOR{$h=H,\ldots, 1$}
	    \STATE Set $\qvalue_{k,h}(\cdot,\cdot)\leftarrow  \Big[\reward_h^k(\cdot,\cdot)+\big\la\hat{\btheta}_{k,h},\bphi_{\vvalue_{k,h+1}}(\cdot,\cdot)\big\ra+\hat{\beta}_k\big\|\hat{\bSigma}_{k,h}^{-1/2}\bphi_{\vvalue_{k,h+1}}(\cdot,\cdot)\big\|_2\Big]_{[0,H-h+1]}$
	    \STATE Set $\vvalue_{k,h}(\cdot)\leftarrow \EE_{a\sim \pi_h^k(\cdot|\cdot)}[\qvalue_{k,h}(\cdot,a)]$
	    \STATE Set the estimated variance $ [\bar{\VV}_{k,h}\vvalue_{k,h+1}](s_h^k,a_h^k)$ as in \eqref{eq:estimated-variance} 
	    \STATE Set the bonus term $E_{k,h}$ as in \eqref{eq:bonus-term}
	    \STATE $\bar\sigma_{k,h}\leftarrow \sqrt{\max\big\{H^2/d,         [\bar{\VV}_{k,h}\vvalue_{k,h+1}](s_h^k,a_h^k)+E_{k,h}  \big\}}$\label{def:var}
	    \STATE $\hat{\bSigma}_{k+1,h}\leftarrow \hat{\bSigma}_{k,h}+\bar\sigma_{k,h}^{-2}\bphi_{\vvalue_{k,h+1}}(s_h^k,a_h^k)\bphi_{\vvalue_{k,h+1}}(s_h^k,a_h^k)^{\top}$\label{algorithm:line18}
	    \STATE $\hat{\bbb}_{k+1,h}\leftarrow \hat{\bbb}_{k,h}+\bar\sigma_{k,h}^{-2}\bphi_{\vvalue_{k,h+1}}(s_h^k,a_h^k)\vvalue_{k,h+1}(s_{h+1}^k)$
	    \STATE $\tilde{\bSigma}_{k+1,h}\leftarrow \tilde{\bSigma}_{k,h}+\bphi_{\vvalue^2_{k,h+1}}(s_h^k,a_h^k)\bphi_{\vvalue^2_{k,h+1}}(s_h^k,a_h^k)^{\top}$
	    \STATE $\tilde{\bbb}_{k+1,h}\leftarrow \tilde{\bbb}_{k,h}+\bphi_{\vvalue^2_{k,h+1}}(s_h^k,a_h^k)\vvalue^2_{k,h+1}(s_{h+1}^k)$
	    \STATE $\hat\btheta_{k+1,h}\leftarrow \hat{\bSigma}^{-1}_{k+1,h}\hat{\bbb}_{k+1,h}, \tilde\btheta_{k+1,h}\leftarrow \tilde{\bSigma}^{-1}_{k+1,h}\tilde{\bbb}_{k+1,h}$\label{algorithm:line22}
	    \ENDFOR \label{algorithm:line23}
	\ENDFOR
	\end{algorithmic}
\end{algorithm*}

\noindent\textbf{Policy improvement phase (Line \ref{algorithm:line5} to Line \ref{algorithm:line10}):} In the policy improvement phase, $\algname$ calculates its policy $\pi^k$ for the current episode, based on its previous policy $\pi^{k-1}$ using the proximal policy optimization (PPO) method \citep{schulman2017proximal}. In detail, let $s_1^k$ be the starting state at the $k$-th episode, then following PPO, we update $\pi^k$ as a solution to the following optimization problem:   
\begin{align}
    \pi^k\leftarrow \argmax_{\pi} [L_{k-1}(\pi) -\alpha^{-1}\tilde{D}_{KL}(\pi, \pi^{k-1})],\label{eq:xxx}
\end{align}
where    
\begin{align*}
    L_{k-1}(\pi) = \EE_{\pi^{k-1}}\bigg[\sum_{h=1}^H \la Q_{k-1, h}(s_h, \cdot), \pi^k_h(\cdot|s_h)\ra\bigg|s_1 = s_1^k\bigg]
\end{align*}
is proportional to the first-order Taylor approximation of $V_{k-1,h}^{\pi_h^{k-1}}$ at $\pi^{k-1}$, and replaces the action-value function $Q_{k-1, h}^{\pi_h^{k-1}}(\cdot,\cdot)$ by the estimated one $Q_{k-1,h}(\cdot,\cdot)$, and 
\begin{align*}
    &\tilde{D}_{KL}(\pi, \pi^{k-1})\notag\\
    &= \EE_{\pi^{k-1}}\bigg[\textstyle{\sum_{h=1}^H} D_{KL}(\pi_h(\cdot|s_h),\pi_h^k(\cdot|s_h))\bigg|s_1 = s_1^k\bigg]
\end{align*}
is the sum of KL-divergences between $\pi_h$ and $\pi_h^{k-1}$, which encourages $\pi^k$ to stay close to $\pi^{k-1}$ to ensure the above first-order Taylor approximation is accurate enough. The closed-form solution to \eqref{eq:xxx} is in Line \ref{algorithm:line7}. Here $\alpha>0$ is the step size of the exponential update. Note that the update rule in Line \ref{algorithm:line7} is also the same as the MDP-E algorithm \citep{even2009online}.
After obtaining $\pi^k$, $\algname$ chooses action $a_h^k$ based on the new policy $\pi_h^k$ and the current state $s_h^k$. It then observes the next state $s_{h+1}^k$ and the adversarial reward function $\reward_h^k(\cdot,\cdot)$. 

\noindent\textbf{Policy evaluation phase (Line \ref{algorithm:line11} to Line \ref{algorithm:line23}):} In the policy evaluation phase, $\algname$ evaluates the policy $\pi^k$ by constructing the action-value function $\qvalue_{k,h}$ and the value function $\vvalue_{k,h}$ for policy $\pi^k$ based on the observed data, which are optimistic estimates of the action-value function $\qvalue_{k,h}^{\pi^k}$ and the value function $\vvalue_{k,h}^{\pi^k}$ respectively. 


Specifically, for each episode $k\in[K]$ and each stage $h \in [H]$, $\algname$ maintains an estimator $\hat\btheta_{k,h}$ and an uncentered covariance matrix $\hat\bSigma_{k,h}$ based on the observed data before the $k$-th episode. Then $\algname$ recursively computes the optimistic $Q_{k,h}, V_{k,h}$ as follows:   
\begin{align}
    \qvalue_{k,h}(s,a)&=  \Big[\reward_h^k(s,a)+\big\la\hat{\btheta}_{k,h},\bphi_{\vvalue_{k,h+1}}(s,a)\big\ra \notag\\
    &\qquad+\hat{\beta}_k\big\|\hat{\bSigma}_{k,h}^{-1/2}\bphi_{\vvalue_{k,h+1}}(s,a)\big\|_2\Big]_{[0,H-h+1]},\notag\\
    \vvalue_{k,h}(s)&= \EE_{a\sim \pi_h^k(\cdot|s)}[\qvalue_{k,h}(s,a)],\notag
\end{align}
where $\hat{\beta_k}$ is the radius of the confidence ball defined as:   
    \begin{align}
  \hat{\beta_k}&= 8\sqrt{d\log(1+k/\lambda)\log(4k^2H/\delta)}\notag\\
  &\qquad +4\sqrt{d}\log(4k^2H/\delta)+\sqrt{\lambda}B.\notag
\end{align}
Now we illustrate how to construct the estimator $\hat\btheta_{k,h}$ and the covariance matrix $\hat\bSigma_{k,h}$, which is the key difference compared with OPPO proposed in \citet{cai2020provably}. Recall \eqref{linear-mixture-property}, we know that the expectation of the random variables $V_{k,h}(s_{k, h+1})$ can be written as a linear function  with weight vector $\btheta_h$. Therefore, a natural way to estimate $\btheta_h$ is to consider it as the unknown weight vector of a stochastic linear bandits problem with context $\bphi_{V_{k,h}}(s_h^k, a_h^k)$ and target $V_{k,h}(s_{k, h+1})$, and apply algorithms for linear bandits such as OFUL \citep{abbasi2011improved}, to obtain the estimator $\hat\btheta_{k,h}$. Such an approach is adopted by \citet{cai2020provably}. 

However, OFUL uses the vanilla linear regression to construct $\hat\btheta_{k,h}$, which is limited to the \emph{homoscedastic} noises case. For the linear mixture MDP, the noises are actually \emph{heteroscedastic} as each target enjoys different noises. Thus the vanilla linear regression is known as statistically inefficient \citep{kirschner2018information}. Inspired by \citet{kirschner2018information, zhou2020nearly}, we adopt the \emph{weighted linear regression} to construct $\hat\btheta_{k,h}$, which is the solution to the following weighted regression problem:    
\begin{align}
    \hat{\btheta}_{k,h}&\leftarrow \arg \min_{\btheta\in \RR^d}\lambda \|\btheta\|_2^2\notag\\& +\textstyle{\sum_{i=1}^{k-1}}\big[\la \bphi_{\vvalue_{i,h+1}}(s_h^i,a_h^i), \btheta \ra-\vvalue_{i,h+1}(s_{h+1}^i)\big]/\bar{\sigma}_{i,h}^2\notag,
\end{align}
where $\bar{\sigma}_{i,h}^2$ is the upper confidence bound of the variance $[\VV_h \vvalue_{i,h+1}(s_{h}^i,a_h^i)]$. $\hat\bSigma_{k,h}$ is the weighted ``covariance" matrix of $\bphi_{\vvalue_{i,h+1}}(s_h^i,a_h^i)$ weighted by $1/\bar{\sigma}_{i,h}^2$. The online update rules for $\hat{\btheta}_{k,h}$ and $\hat\bSigma_{k,h}$ are shown in Lines \ref{algorithm:line18} and \ref{algorithm:line22}. 

Next we show how to construct the variance upper bounds $\bar{\sigma}_{i,h}^2$. Due to \eqref{linear-mixture-property_2}, it suffices to estimate $\la\bphi_{\vvalue_{k,h+1}}(s_h^k,a_h^k),\btheta_{h}\ra$ and $\la\bphi_{\vvalue_{k,h+1}^2}(s_h^k,a_h^k),\btheta_{h}\ra$. For the first one, we use $\la\bphi_{\vvalue_{k,h+1}}(s_h^k,a_h^k),\hat\btheta_{k,h}\ra$ to estimate it. For the second one, we use $\la\bphi_{\vvalue_{k,h+1}^2}(s_h^k,a_h^k),\tilde\btheta_{k,h}\ra$, where $\tilde\btheta_{k,h}$ is the linear regression estimator with contexts $\bphi_{\vvalue_{i,h+1}^2}(s_h^i,a_h^i)$ and targets $\vvalue_{i,h+1}^2(s_h^{i+1})$. Its update rule is shown in Line~\ref{algorithm:line22}. Notice that the stochastic noise in the linear regression only comes from the stochastic transition probability $\PP_h$ rather than the adversarial reward. Since $\PP_h$ is fixed across different episodes, we can bound the estimation error and specify the choice of $\bar\sigma_{k,h}^2$ in the following lemma.

\begin{lemma}\label{LEMMA: CONCENTRATE} We define the 
 the estimated variance $[\bar{\VV}_{k,h}\vvalue_{k,h+1}](s_h^k,a_h^k)$  as 
\begin{align}
    [\bar{\VV}_{k,h}\vvalue_{k,h+1}](s_h^k,a_h^k)&=\Big[\big\la\bphi_{\vvalue^2_{k,h+1}}(s_h^k,a_h^k),\tilde{\btheta}_{k,h}\big\ra\Big]_{[0,H^2]}\notag\\&-\Big[\big\la\bphi_{\vvalue_{k,h+1}}(s_h^k,a_h^k),\hat{\btheta}_{k,h}\big\ra_{[0,H]}\Big]^2,\label{eq:estimated-variance}
\end{align}
then with probability at least $1-3\delta$, for all $k\in[K], h\in[H]$, we have   
\begin{align}
   \big\|[\bar\VV_h\vvalue_{k,h+1}](s_h^k,a_h^k)         -[\VV_h\vvalue_{k,h+1}](s_h^k,a_h^k)\big\|\leq E_{k,h},\notag
\end{align}
where $E_{k,h}$ is defined as   
\begin{align}
E_{k,h}&=\min \Big\{\tilde{\beta}_k\big\|\tilde{\bSigma}_{k,h}^{-1/2}\bphi_{\vvalue^2_{k,h+1}}(s_h^k,a_h^k)\big\|_2,H^2\Big\}\notag\\&\qquad+\min \Big\{2H\bar{\beta}_k\big\|\hat{\bSigma}_{k,h}^{-1/2}\bphi_{\vvalue_{k,h+1}}(s_h^k,a_h^k)\big\|_2,H^2\Big\},\notag\\
    \tilde{\beta}_k&=8H^2\sqrt{d\log\big(1+kH^4/(d\lambda)\big)\log(4k^2H/\delta)} \notag\\&\qquad +4H^2\log(4k^2H/\delta)+\sqrt{\lambda}B,\notag\\
    \bar{\beta}_k&=8d\sqrt{\log(1+k/\lambda)\log(4k^2H/\delta)} \notag\\&\qquad +4\sqrt{d}\log(4k^2H/\delta)+\sqrt{\lambda}B.\label{eq:bonus-term}
\end{align}
For the estimator $\hat{\btheta}_{k,h}$, we have 
\begin{align}
    \btheta_h \in \cC_{k,h}= \{\btheta:\big\|\hat{\bSigma}_{k,h}^{1/2}(\btheta - \hat{\btheta}_{k,h})\big\|\leq \hat{\beta}_k\}.\label{eq:variance-ucb}
\end{align}
\end{lemma}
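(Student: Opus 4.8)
The plan is to reduce all three claims to self-normalized concentration for vector-valued martingales, and then to arrange the arguments so as to sidestep the circular dependence between the variance estimate and the weights $\bar\sigma_{i,h}$ that appear inside the weighted regression defining $\hat\btheta_{k,h}$. By \eqref{linear-mixture-property} and \eqref{linear-mixture-property_2}, both $\hat\btheta_{k,h}$ and $\tilde\btheta_{k,h}$ are (weighted) ridge estimators of the \emph{same} ground truth $\btheta_h$, with martingale-difference noises $\hat\epsilon_{i,h}=\vvalue_{i,h+1}(s_{h+1}^i)-\la\bphi_{\vvalue_{i,h+1}}(s_h^i,a_h^i),\btheta_h\ra$ and $\tilde\epsilon_{i,h}=\vvalue_{i,h+1}^2(s_{h+1}^i)-\la\bphi_{\vvalue_{i,h+1}^2}(s_h^i,a_h^i),\btheta_h\ra$, bounded by $H$ and $H^2$ respectively. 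I would invoke the Bernstein-type self-normalized bound available in this setting (as in \citet{zhou2020nearly}) throughout.

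First I would establish two confidence sets that hold \emph{unconditionally}, i.e.\ without assuming the weights are valid variance upper bounds. For $\tilde\btheta_{k,h}$, applying the self-normalized bound to the unweighted regression with noise bound $H^2$ and feature norm $\|\bphi_{\vvalue^2}\|_2\le H^2$ (since $\vvalue^2/H^2:\cS\to[0,1]$) yields $\btheta_h\in\{\btheta:\|\tilde\bSigma_{k,h}^{1/2}(\btheta-\tilde\btheta_{k,h})\|\le\tilde\beta_k\}$. For $\hat\btheta_{k,h}$ the key observation is that the algorithmic floor $\bar\sigma_{i,h}^2\ge H^2/d$ forces the weighted noise $\hat\epsilon_{i,h}/\bar\sigma_{i,h}$ to be bounded by $\sqrt d$, so that even the crude variance proxy $\EE[(\hat\epsilon_{i,h}/\bar\sigma_{i,h})^2]\le d$ gives, via the same inequality, the confidence set with radius $\bar\beta_k$: $\btheta_h\in\{\btheta:\|\hat\bSigma_{k,h}^{1/2}(\btheta-\hat\btheta_{k,h})\|\le\bar\beta_k\}$. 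The proxy $\sigma=\sqrt d$ is exactly what produces the extra $\sqrt d$ factor in $\bar\beta_k$ relative to $\hat\beta_k$. Neither bound uses the variance concentration, so they break the circularity.

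Next I would bound the two pieces of the variance error. Writing $[\bar\VV_{k,h}\vvalue_{k,h+1}]-[\VV_h\vvalue_{k,h+1}]$ as the difference of $\big[\la\bphi_{\vvalue^2},\tilde\btheta_{k,h}\ra\big]_{[0,H^2]}-\la\bphi_{\vvalue^2},\btheta_h\ra$ and $\big(\big[\la\bphi_{\vvalue},\hat\btheta_{k,h}\ra\big]_{[0,H]}\big)^2-\big(\la\bphi_{\vvalue},\btheta_h\ra\big)^2$, I would handle the first by noting that the true value $\la\bphi_{\vvalue^2},\btheta_h\ra=[\PP_h\vvalue^2]\in[0,H^2]$, so truncation to $[0,H^2]$ is a contraction toward it; the $\bSigma$-weighted Cauchy--Schwarz together with the $\tilde\beta_k$ confidence set then gives $\min\{\tilde\beta_k\|\tilde\bSigma_{k,h}^{-1/2}\bphi_{\vvalue^2}\|_2,H^2\}$. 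For the second I would use $|x^2-y^2|=|x-y|\,|x+y|$ with $x=\big[\la\bphi_{\vvalue},\hat\btheta_{k,h}\ra\big]_{[0,H]}$ and $y=\la\bphi_{\vvalue},\btheta_h\ra\in[0,H]$; both lie in $[0,H]$, so $|x+y|\le 2H$, while the same contraction-plus-Cauchy--Schwarz argument with the $\bar\beta_k$ set bounds $|x-y|$ by $\bar\beta_k\|\hat\bSigma_{k,h}^{-1/2}\bphi_{\vvalue}\|_2$. Summing the two pieces reproduces exactly $E_{k,h}$.

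Finally, with the variance concentration in hand, I would close the loop to obtain \eqref{eq:variance-ucb}. The definition $\bar\sigma_{k,h}^2\ge[\bar\VV_{k,h}\vvalue_{k,h+1}]+E_{k,h}$, combined with the bound just proved, yields $\bar\sigma_{i,h}^2\ge[\VV_h\vvalue_{i,h+1}]$ for every past index $i$, i.e.\ the weights are genuine variance upper bounds. This lets me re-run the self-normalized concentration for $\hat\btheta_{k,h}$ with the tight proxy $\sigma=1$ instead of $\sqrt d$, giving the smaller radius $\hat\beta_k$ and hence $\btheta_h\in\cC_{k,h}$. The main obstacle is precisely this ordering: proving $\hat\beta_k$ directly would require knowing the weights are valid, which is itself a consequence of the variance bound, so one must first prove the two variance-free bounds, use them to certify the weights, and only then invoke the tight Bernstein bound. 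A secondary technicality is tracking the constants $H$, $H^2$, $\sqrt d$ and the feature-norm bound $\|\bphi_{\vvalue^2}\|_2\le H^2$ through the logarithmic factors so that the radii match $\tilde\beta_k,\bar\beta_k,\hat\beta_k$ exactly, together with a union bound over $k\in[K]$, $h\in[H]$ (three concentration events) to secure the $1-3\delta$ probability.
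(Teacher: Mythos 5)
Your proposal is correct and follows essentially the same route as the paper's proof: two self-normalized Bernstein bounds with crude variance proxies ($H^4$ for $\tilde\btheta_{k,h}$ and $d$ for $\hat\btheta_{k,h}$, the latter enabled by the floor $\bar\sigma_{k,h}^2\ge H^2/d$), the $|x^2-y^2|\le 2H|x-y|$ splitting with truncation and weighted Cauchy--Schwarz to obtain $E_{k,h}$, and then a second pass with variance proxy $1$ to certify $\btheta_h\in\cC_{k,h}$ with the tighter radius $\hat\beta_k$. You also correctly identify the circularity-breaking order of the three concentration events, which is exactly how the paper structures its argument.
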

By Lemma \ref{LEMMA: CONCENTRATE}, we know that in order to guarantee $\bar\sigma_{k,h}^2$ is an upper bound of the variance, it suffices to set it as $[\bar{\VV}_{k,h}\vvalue_{k,h+1}](s_h^k,a_h^k) + E_{k,h}$. Finally, due to the technical reason, we set $\bar\sigma_{k,h}^2$ as   
\begin{align}
    \bar\sigma_{k,h}= \sqrt{\max\big\{H^2/d,         [\bar{\VV}_{k,h}\vvalue_{k,h+1}](s_h^k,a_h^k)+E_{k,h}  \big\}}\notag.
\end{align}
Furthermore, according to \eqref{eq:variance-ucb}, we have   
\begin{align} 
    \qvalue_{k,h}(s,a)&=\Big[\reward_h^k(s,a)+\max_{\btheta \in\cC_{k,h}}\big\la \bphi_{\vvalue_{k,h+1}},\btheta\big\ra\Big]_{[0,H-h+1]}\notag\\
    &\ge\Big[\reward_h^k(s,a)+\big\la \bphi_{\vvalue_{k,h+1}},\btheta_h\big\ra\Big]_{[0,H-h+1]},\notag
\end{align}
and by \eqref{linear-mixture-property}, it is easy to show that the optimistic action-value function $\qvalue_{k,h}(s,a)$ and the optimistic value function $\vvalue_{k,h}(s)$ are indeed upper bounds of the true action-value function $\qvalue_{k,h}^{\pi^k}$ and the true value function $\vvalue_{k,h}^{\pi^k}$, respectively.

\subsection{Computational complexity}
The computational complexity of $\algname$ is related to the property of the given feature mapping $\bphi(s'|s,a)$ and we consider a special class of linear mixture MDPs studied by \citet{yang2019reinforcement,zhou2020provably,zhou2020nearly}. For this special class of linear mixture MDPs, we have 
\begin{align}
    [\bphi(s'|s,a)]_i = [\bpsi(s')]_i\cdot [\bmu(s,a)]_i,\forall i \in[d],\notag
\end{align}
where $\bpsi:\cS \rightarrow \RR^d$ and $\bmu:\cS\times \cA \rightarrow \RR^d$. Under this setting, for each function $\vvalue$, the vector $\bphi_{\vvalue}(s,a)$ can be written as the product of $\bmu(s,a)$ and $\sum_{s'\in \cS} \bpsi(s')\vvalue(s')$. Furthermore, the term $\sum_{s'\in \cS} \bpsi(s')\vvalue(s')$ can be estimated by Monte Carlo method and in this work, we assume an access to the oracle $\cO$ which can compute the term $\sum_{s'\in \cS} \bpsi(s')\vvalue(s')$.
We also assume the size of action space is finite ($|\cA|<\infty$) and analyze the computational complexity of $\algname$ in the sequel.

 Recall that  $\algname$ can be divided into two phases: (1) policy improvement; and (2) policy evaluation. For the policy evaluation phase, in order to compute the vector $\bphi_{\vvalue_{k,h+1}}(s_h^k,a_h^k)$ and the vector $\bphi_{\vvalue^2_{k,h+1}}(s_h^k,a_h^k)$, $\algname$ needs to compute the term $\sum_{s'\in \cS} \bpsi(s')\bphi_{\vvalue_{k,h+1}}(s')$ and $\sum_{s'\in \cS} \bpsi(s')\vvalue^2_{k,h+1}(s')$, which need two accesses to the oracle $\cO$. Given the vector $\bphi_{\vvalue_{k,h+1}}(s_h^k,a_h^k)$  and $\bphi_{\vvalue_{k,h+1}}(s_h^k,a_h^k)$, the covariance matrix can be computed in $O(d^2)$ time, and the estimators $\hat{\btheta}_{k+1,h}$ and $\tilde{\btheta}_{k+1,h}$ can be computed in $O(d^3)$ time. Therefore, the policy evaluation phase can be computed in $O(d^3HK)$ time with $O(HK)$ accesses to the oracle $\cO$.

  In the policy improvement phase, $\algname$ will update the policy $\pi_h^k(s|a)$ for each state-action pair $(s,a)$:
\begin{align} 
  \pi_h^{k+1}(a|s) \propto \pi_h^{k}(a|s) \exp\big\{\alpha\qvalue_{k,h}(s,a)\big\},\notag
\end{align}
which leads to an $O(|\cS||\cA|K)$ computation complexity. In order to make the computation more efficient for large state space or even continuous state space $\cS$, an alternative approach is to calculate the policy $\pi_h^k(\cdot|s_h^k)$ directly. More specifically, in Line \ref{algorithm:line8},
  we only need the value of policy $\pi_h^k$ for state $s_h^k$ , which can be calculated as follows
\begin{align}
    \pi_h^{k}(a|s_h^k) &\propto  \exp\bigg\{\alpha\sum_{i=1}^{k-1}\qvalue_{i,h}(s_h^k,a)\bigg\}\notag\\
    &\propto \exp\bigg\{\alpha\sum_{i=1}^{k-1}\Big[\reward_h^i(s_h^k,a)+\big\la\hat{\btheta}_{i,h},\bphi_{\vvalue_{i,h+1}}(s_h^k,a)\big\ra \notag\\&\qquad+\hat{\beta}_i\big\|\hat{\bSigma}_{i,h}^{-1/2}\bphi_{\vvalue_{i,h+1}}(s_h^k,a)\big\|_2\Big]_{[0,H-h+1]}\bigg\}\notag.
\end{align}
Therefore, given the covariance matrix $\hat{\bSigma}_{i,h}$, the estimator $\hat{\btheta}_{i,h}$ and the term $\sum_{s'\in \cS} \bpsi(s')\vvalue_{i,h+1}(s')$, the policy $\pi_h^k(\cdot|s_h^k)$ can be computed in $O\big(d^3K|\cA|\big)$ time complexity and it will take in total $O\big(d^3HK^2|\cA|\big)$ time complexity to compute all policies $\pi_h^k(\cdot|s_h^k)$ for $k\in [K]$ and $h\in [H]$.
By taking the best of both computing approaches, the time complexity for the policy improvement phase of $\algname$ is $O\big(\min\big(d^3HK^2|\cA|,|\cS||\cA|K\big)\big)$.

 
Combining the time complexity of the two phases, the total time complexity of POWERS is $O\big(\min\big(d^3HK^2|\cA|,|\cS||\cA|K\big) + d^3HK\big)$ with $O(HK)$ accesses to the oracle $\cO$.


\section{MAIN RESULTS}
In this section, we provide the regret bound for our algorithm $\algname$.  Here, $T=KH$ is the number of interactions with the MDP.

\begin{theorem}\label{THM:1}
For any linear mixture MDP $M$, if we set the parameter $\lambda=1/B^2$ in $\algname$, then with probability at least $1-6\delta$, the regret of $\algname$ is upper bounded as follows:
\begin{align}
    \text{Regret}(M,K) &\leq\tilde{O}\big(\alpha TH^2+\alpha^{-1} H\log |\cA|+ \sqrt{d^2H^2T} \notag\\&\qquad +\sqrt{dH^3T}+d^2H^3+d^{2.5}H^{2.5}\big).\notag
\end{align}
\end{theorem}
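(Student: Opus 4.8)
The plan is to bound the regret against the best fixed policy $\pi^*$ by adapting the optimistic policy optimization analysis of \citet{cai2020provably} to the variance-weighted estimator of $\algname$. Writing $V_{k,h}$ for the optimistic value functions built in the policy evaluation phase and introducing the model prediction error $\iota_{k,h}(s,a) = \reward_h^k(s,a) + [\PP_h V_{k,h+1}](s,a) - Q_{k,h}(s,a)$, I would invoke the extended value difference lemma twice (once with comparator $\pi^*$ and once with $\pi^k$) to obtain, for every episode $k$,
\begin{align*}
V_{k,1}^{\pi^*}(s_1^k) - V_{k,1}^{\pi^k}(s_1^k) = \underbrace{\sum_{h=1}^H \EE_{\pi^*}\big[\la Q_{k,h}(s_h,\cdot), \pi_h^*(\cdot|s_h) - \pi_h^k(\cdot|s_h)\ra\big]}_{\text{(A)}} + \underbrace{\sum_{h=1}^H \EE_{\pi^*}[\iota_{k,h}]}_{\text{(B)}} - \underbrace{\sum_{h=1}^H \EE_{\pi^k}[\iota_{k,h}]}_{\text{(C)}}.
\end{align*}
Summing over $k$ splits the regret into an online-learning term (A), an optimism term (B), and a statistical-error term (C).

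For term (A), since Line \ref{algorithm:line7} is exactly the exponential-weight update $\pi_h^k \propto \pi_h^{k-1}\exp(\alpha Q_{k-1,h})$ and each $Q_{k,h}$ is clipped into $[0,H]$, I would apply the standard mirror-descent regret bound over the simplex pointwise in the state, giving $\sum_{k=1}^K \la Q_{k,h}(s,\cdot), \pi_h^*(\cdot|s)-\pi_h^k(\cdot|s)\ra \le \alpha^{-1}\log|\cA| + \alpha K H^2$ for each $h$; summing over $h$ and using $T=KH$ yields the $\alpha T H^2 + \alpha^{-1}H\log|\cA|$ contribution. Term (B) is nonpositive: the optimism argument recorded in the excerpt shows $Q_{k,h}(s,a) \ge [\reward_h^k(s,a)+[\PP_h V_{k,h+1}](s,a)]_{[0,H-h+1]}$, and since $\reward_h^k+\PP_h V_{k,h+1}\in[0,H-h+1]$ the clipping is vacuous, so $\iota_{k,h}\le 0$ everywhere and (B) can simply be dropped.

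The heart of the proof is term (C). On the confidence event of Lemma \ref{LEMMA: CONCENTRATE} I would bound, via Cauchy--Schwarz and $\btheta_h\in\cC_{k,h}$, $-\iota_{k,h}(s,a) \le \min\{2\hat\beta_k \|\hat\bSigma_{k,h}^{-1/2}\bphi_{V_{k,h+1}}(s,a)\|_2, H\}$. Converting $\EE_{\pi^k}[\cdot]$ into the realized trajectory value at $(s_h^k,a_h^k)$ introduces a bounded martingale difference, controlled by Azuma's inequality and absorbed into the leading $\sqrt{dH^3T}$ term. It then remains to bound the realized bonus sum $\sum_{k,h} 2\hat\beta_k \|\hat\bSigma_{k,h}^{-1/2}\bphi_{V_{k,h+1}}(s_h^k,a_h^k)\|_2$: I would factor out the weight $\bar\sigma_{k,h}$ and apply Cauchy--Schwarz across $k$ and $h$ to split it into $\big(\sum_{k,h}\bar\sigma_{k,h}^2\big)^{1/2}$ times $\big(\sum_{k,h}\|\bar\sigma_{k,h}^{-1}\bphi_{V_{k,h+1}}(s_h^k,a_h^k)\|_{\hat\bSigma_{k,h}^{-1}}^2\big)^{1/2}$, the second factor being $\tilde O(\sqrt{dH})$ by the elliptical potential lemma \citep{abbasi2011improved}.

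The main obstacle, and the reason for the Bernstein bonus and variance weighting, is controlling $\sum_{k,h}\bar\sigma_{k,h}^2$. By Lemma \ref{LEMMA: CONCENTRATE} and Line \ref{def:var} one has $\bar\sigma_{k,h}^2 \le H^2/d + [\VV_h V_{k,h+1}](s_h^k,a_h^k) + 2E_{k,h}$, so I would need a law-of-total-variance argument establishing $\sum_{k,h}[\VV_h V_{k,h+1}](s_h^k,a_h^k) = \tilde O(H^2 K)$ even though the $V_{k,h}$ are the \emph{optimistic, episode-varying} value functions rather than those of a single fixed policy; this requires relating $V_{k,h}$ to $V_{k,h}^{\pi^k}$ through the already-bounded bonus sum (a self-bounding recursion) and discharging $\sum_{k,h}E_{k,h}$ as lower-order terms. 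Plugging $\sum_{k,h}\bar\sigma_{k,h}^2 = \tilde O(TH^2/d + TH)$ back in gives $\hat\beta_k\cdot\tilde O(\sqrt{dH})\cdot\tilde O(\sqrt{TH^2/d+TH}) = \tilde O(\sqrt{d^2H^2T}+\sqrt{dH^3T})$, while the crude truncations and the lower-order parts of $\hat\beta_k,\tilde\beta_k,E_{k,h}$ produce the $d^2H^3 + d^{2.5}H^{2.5}$ terms. Combining (A), (B)$\,\le 0$, and (C), and taking a union bound over the event of Lemma \ref{LEMMA: CONCENTRATE} and the martingale arguments, yields the claimed bound with probability at least $1-6\delta$.
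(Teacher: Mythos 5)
Your proposal is correct and follows essentially the same route as the paper: the paper's Lemmas \ref{LEMMA:TELESCOPE-SUM} and \ref{LEMMA:ONE-STEP-DESCENT} are exactly your terms (A) and (B) (optimism making the prediction error one-sided, then pointwise mirror descent giving $\alpha TH^2+\alpha^{-1}H\log|\cA|$), and its Lemmas \ref{LEMMA:TRANSITION}--\ref{LEMMA: TOTAL-ESTIMATE-VARIANCE} carry out your term (C) analysis verbatim, including the Cauchy--Schwarz split against $\bar\sigma_{k,h}$, the elliptical potential lemma, and the self-bounding total-variance recursion. The only cosmetic difference is that the paper organizes the decomposition as $I_1+I_2$ around the optimistic value function rather than via the three-term extended value difference identity, which is the same algebra.
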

\begin{remark}
If we set the learning rate $\alpha=O\big(\sqrt{\log |\cA|/{(H^2K)}}\big)$ in $\algname$,
then Theorem~\ref{THM:1} suggests that with high probability, the regret of $\algname$ is upper bounded by $\tilde{O}(\sqrt{d^2H^2T}+\sqrt{dH^3T}+d^2H^3+d^{2.5}H^{2.5}+\sqrt{H^3 \log |\cA| T})$. When $T\ge d^3H^3, d\ge H,d\ge \log |\cA|$, the regret can be simplified as $\tilde{O}(dH\sqrt{T})$. Compared with the result of \citet{cai2020provably}, the $\algname$ improves the upper bound of regret by a factor of $\sqrt{H}$. In addition, compared with the $\text{UCRL-VTR+}$ algorithm proposed by \citet{zhou2020nearly} for episodic linear mixture MDPs with deterministic rewards, our algorithm $\algname$ provides a robustness guarantee against adversarial rewards while achieving the same regret guarantee $\tilde{O}(dH\sqrt{T})$.
\end{remark}

The following theorem gives a lower bound of the regret for any algorithms for learning the adversarial linear mixture MDPs.
\begin{theorem}\label{THM:2}
Suppose $B\ge 2, d\ge 4, H\ge 3,K\ge (d-1)^2H/2$, then for any algorithm,  there exists a time-inhomogenous, episodic $B$-bounded adversarial linear mixture MDP $M$, such that the expected regret for this MDP is lower bounded by $\Omega(dH\sqrt{T})$.
\end{theorem}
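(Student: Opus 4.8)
The plan is to produce an explicit family of hard MDPs and to show that, averaged over the family, every algorithm suffers $\Omega(dH\sqrt{T})$ expected regret, so that the worst instance in the family realizes the bound. The first observation is that, although the statement allows adversarial rewards, it suffices to let the adversary play a \emph{fixed} reward in every episode: if $\reward_h^k=\reward_h$ is independent of $k$, then for any fixed policy $\pi$ the value $\vvalue_{k,1}^{\pi}(s_1^k)$ is the same for every $k$; writing $\vvalue_{1}^{\pi}(s_1)$ for this common value and $\vvalue_{1}^{*}(s_1)=\sup_{\pi}\vvalue_{1}^{\pi}(s_1)$, the regret collapses to the stochastic pseudo-regret $\sum_{k=1}^K\big(\vvalue_{1}^{*}(s_1)-\vvalue_{1}^{\pi^k}(s_1)\big)$. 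Thus it is enough to exhibit a \emph{stochastic} linear mixture MDP that is hard in this sense, and a worst-case lower bound for it is a fortiori a lower bound for the adversarial problem.

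For the construction I would adapt the standard hard instance for inhomogeneous linear mixture MDPs \citep{zhou2020nearly}. Take two states $\cS=\{x_0,x_1\}$ with $x_1$ absorbing, and action set $\cA=\{-1,+1\}^{d-1}$. For each stage $h\in[H]$ fix a hidden sign vector $\bmu_h\in\{-\Delta,\Delta\}^{d-1}$ and set $\btheta_h=(\delta,\bmu_h)$, so that the transition out of $x_0$ under action $\mathbf{a}$ is governed by $\langle\bphi(\cdot\,|x_0,\mathbf{a}),\btheta_h\rangle=\delta+\langle\bmu_h,\mathbf{a}\rangle$. I would choose the base probability $\delta$ and design $\bphi(s'|s,a)$ so that both requirements of Definition \ref{assumption-linear} hold with $B=2$, i.e.\ $\PP_h(s'|s,a)=\langle\bphi(s'|s,a),\btheta_h\rangle$ and $\|\bphi_{\vvalue}\|_2\le1$ for every $\vvalue:\cS\to[0,1]$, and place the reward so that remaining in the favourable branch is strictly preferable. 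The essential feature is that the $H$ hidden vectors are independent, so recovering the optimal policy forces the learner to identify all $(d-1)H$ signs.

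The core is an information-theoretic lower bound over this $(d-1)H$-dimensional hypercube of instances. The optimal action at stage $h$ is $\mathbf{a}_h^{*}=\mathrm{sign}(\bmu_h)$, and I would show that each coordinate played with the wrong sign costs an expected value gap of order $\Delta$ per episode, weighted by the remaining horizon. To bound how fast the learner can recover the signs, I would fix a stage $h$ and a coordinate $j$, compare the trajectory distribution under $\bmu_h$ with the one obtained by flipping $\mu_{h,j}$, and invoke a KL/Pinsker estimate showing the two are indistinguishable unless the relevant state--action pair is visited $\Omega(1/\Delta^2)$ times. An Assouad-type decomposition over the $(d-1)H$ coordinates, together with the fact that the total number of stage visits is $KH$, then yields a regret lower bound of order $(d-1)H\,\Delta K$ minus a smaller information penalty. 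Optimizing the free parameter $\Delta$---whose admissible range is exactly what the hypothesis $K\ge(d-1)^2H/2$ secures, since it keeps every transition probability in $[0,1]$ while leaving the flips statistically close---and substituting $T=KH$ gives $\Omega(dH\sqrt{T})$.

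The main obstacle is twofold. First, I must engineer $\bphi$ and $\delta$ so that the family is simultaneously a valid $2$-bounded linear mixture MDP (correct normalization of $\bphi_{\vvalue}$ and of $\btheta_h$) and genuinely hard to learn; the normalization constraint $\|\bphi_{\vvalue}\|_2\le1$ interacts delicately with the size $(d-1)\Delta$ of the signal. Second, I must obtain the exact power of $H$: beyond the $\sqrt{H}$ that is implicit in $T=KH$, the extra factor of $H$ comes from combining the $H$ independent per-stage sub-problems with the horizon-dependent scale of the value function, so I would have to track the per-stage regret contributions carefully rather than collapsing the stages into a single linear bandit.
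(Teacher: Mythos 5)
Your proposal is correct and follows essentially the same route as the paper: fix the adversarial reward across episodes so the regret collapses to the stochastic pseudo-regret, then invoke the hard time-inhomogeneous linear mixture MDP family of \citet{zhou2020nearly} (chain with an absorbing rewarding state, actions in $\{-1,1\}^{d-1}$, hidden per-stage sign vectors $\bmu_h$). The only difference is that the paper simply cites Theorem 5.6 of \citet{zhou2020nearly} for the $\Omega(dH\sqrt{T})$ stochastic lower bound, whereas you propose to re-derive it via the Assouad/KL argument, which is exactly the content of that cited result.
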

\begin{remark}
Theorem \ref{THM:2} suggests that when the number of episodes $K$ is large enough, for any algorithm, the regret of learning time-inhomogenous episodic adversarial linear mixture MDPs is at least $\Omega(dH\sqrt{T})$. Furthermore, the lower bound of regret in Theorem \ref{THM:2} matches the upper bound in Theorem \ref{THM:1} up to logarithmic factors, which suggests that $\algname$ is nearly minimax optimal for learning adversarial linear mixture MDPs.
\end{remark}

\section{PROOF OVERVIEW}\label{SECTION: 6}

In this section, we provide the proof sketch of Theorem~\ref{THM:1}. The complete proof is deferred to the supplementary material.
Our proof is based on Lemma \ref{LEMMA: CONCENTRATE}, which says with high probability, the true parameter $\btheta_h$ is contained in the confidence set $\cC_{k,h}$. For simplicity, we denote by $\cE$ the event when the result of Lemma \ref{LEMMA: CONCENTRATE} holds and due to Lemma \ref{LEMMA: CONCENTRATE}, we have $\Pr(\cE)\ge 1-3\delta$. 
Recall the definition of regret, we have
\begin{align}
      \text{Regret}(K)
      &=\underbrace{\textstyle{\sum_{k=1}^K}\big(\vvalue_{k,1}^{*}(s_1^k)-\vvalue_{k,1}(s_1^k)\big)}_{I_1}\notag\\&\qquad+\underbrace{\textstyle{\sum_{k=1}^K}\big(\vvalue_{k,1}(s_1^k)-\vvalue_{k,1}^{\pi^k}(s_1^k)\big)}_{I_2}\notag.
\end{align}
Therefore, the proof can be divided into two main steps.

\noindent\textbf{Step 1: Bounding the term $I_1$.} 
\begin{lemma}\label{LEMMA:TELESCOPE-SUM}
On the event $\cE$, for all $k\in[K]$, we have
\begin{align}
\vvalue_{k,1}^{*}(s_1^k)-\vvalue_{k,1}(s_1^k)&\leq \EE\bigg[\sum_{h=1}^H\Big\{\EE_{a\sim \pi^*_{h}(\cdot|s_h)}\big[\qvalue_{k,h}(s_h, a)\big]\notag\\&-\EE_{a\sim \pi^k_{h}(\cdot|s_h)}\big[\qvalue_{k,h}(s_h, a)\big]\big|s_1=s_1^k\Big\}\bigg].\notag
\end{align}
Here $\EE[\cdot|s=s_1^k]$ is the expectation with respect to the randomness of the state-action
sequence $\{(s_h,a_h)\}_{h=1}^H$, where $a_{h} \sim \pi^*_{h}(\cdot|s_{h})$ and $s_{h+1} \sim \PP_h(\cdot| s_{h}, a_{h})$.
\end{lemma}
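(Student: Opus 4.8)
The plan is to establish the bound by a backward, one-stage-at-a-time recursion on the per-stage value gap $\vvalue_{k,h}^{*}(s)-\vvalue_{k,h}(s)$, peeling off each stage and collecting the policy-mismatch terms that make up the right-hand side. Throughout I work on the event $\cE$, so that Lemma~\ref{LEMMA: CONCENTRATE} guarantees $\btheta_h\in\cC_{k,h}$ for all $k,h$. First I would fix a state $s$ and a stage $h$ and, using $\vvalue^{*}_{k,h}(s)=\EE_{a\sim\pi^*_{h}(\cdot|s)}[\qvalue^{*}_{k,h}(s,a)]$ and $\vvalue_{k,h}(s)=\EE_{a\sim\pi^k_{h}(\cdot|s)}[\qvalue_{k,h}(s,a)]$, insert and subtract $\EE_{a\sim\pi^*_{h}(\cdot|s)}[\qvalue_{k,h}(s,a)]$ to write
\begin{align}
\vvalue_{k,h}^{*}(s)-\vvalue_{k,h}(s)=\EE_{a\sim\pi^*_{h}(\cdot|s)}\big[\qvalue^{*}_{k,h}(s,a)-\qvalue_{k,h}(s,a)\big]+\Big(\EE_{a\sim\pi^*_{h}(\cdot|s)}[\qvalue_{k,h}(s,a)]-\EE_{a\sim\pi^k_{h}(\cdot|s)}[\qvalue_{k,h}(s,a)]\Big).\notag
\end{align}
The second group is exactly the per-stage policy-mismatch term the lemma accumulates; the task is to bound the first expectation by a one-step transition of the value gap at stage $h+1$.

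For the estimation term I would invoke the optimistic inequality derived just before the lemma: on $\cE$, $\qvalue_{k,h}(s,a)\ge\big[\reward_h^k(s,a)+[\PP_h\vvalue_{k,h+1}](s,a)\big]_{[0,H-h+1]}$, while by Bellman optimality $\qvalue^{*}_{k,h}(s,a)=\reward_h^k(s,a)+[\PP_h\vvalue^{*}_{k,h+1}](s,a)$. The observation that makes the clipping harmless is that both $\vvalue_{k,h+1}$ and $\vvalue^{*}_{k,h+1}$ take values in $[0,H-h]$ (the optimistic $\qvalue_{k,h+1}$ is clipped to $[0,H-h]$, and $\qvalue^{*}_{k,h+1}$ sums at most $H-h$ rewards in $[0,1]$). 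Hence both $\reward_h^k(s,a)+[\PP_h\vvalue_{k,h+1}](s,a)$ and $\qvalue^{*}_{k,h}(s,a)$ already lie in $[0,H-h+1]$, the operator $[\cdot]_{[0,H-h+1]}$ acts as the identity on them, and subtracting gives the clean, non-expansive passage
\begin{align}
\qvalue^{*}_{k,h}(s,a)-\qvalue_{k,h}(s,a)\le[\PP_h\vvalue^{*}_{k,h+1}](s,a)-[\PP_h\vvalue_{k,h+1}](s,a)=\big[\PP_h(\vvalue^{*}_{k,h+1}-\vvalue_{k,h+1})\big](s,a).\notag
\end{align}

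Plugging this into the decomposition and taking the expectation over $a\sim\pi^*_{h}(\cdot|s)$ followed by $s'\sim\PP_h(\cdot|s,a)$ produces the one-step recursion $\vvalue_{k,h}^{*}(s)-\vvalue_{k,h}(s)\le\EE[\vvalue^{*}_{k,h+1}(s')-\vvalue_{k,h+1}(s')]+\xi_h(s)$, where $\xi_h(s)$ abbreviates the policy-mismatch term at $s$ and the expectation is under $\pi^*_h$ and $\PP_h$. Starting from $s=s_1^k$ and iterating down to $h=H$, with base case $\vvalue^{*}_{k,H+1}=\vvalue_{k,H+1}\equiv 0$ and the tower property of conditional expectation over the trajectory $\{(s_h,a_h)\}_{h=1}^H$ generated by $\pi^*$ and $\PP$, telescopes the leading terms and leaves precisely $\EE\big[\sum_{h=1}^H\xi_h(s_h)\,\big|\,s_1=s_1^k\big]$, which is the claimed inequality.

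I expect the main obstacle to be the careful bookkeeping of the two truncation operators $[\cdot]_{[0,H-h+1]}$: one must verify, via the range bounds on $\vvalue_{k,h+1}$ and $\vvalue^{*}_{k,h+1}$, that neither clip is active at the relevant arguments, so that the passage to $\qvalue^{*}_{k,h}-\qvalue_{k,h}\le\PP_h(\vvalue^{*}_{k,h+1}-\vvalue_{k,h+1})$ holds without an extra positive-part operator that would otherwise obstruct the telescoping. Once this is settled, the remainder is the standard optimism-plus-recursion argument, and the only other care needed is matching the conditioning of $\EE[\cdot\mid s_1=s_1^k]$ to the $\pi^*$-induced trajectory as defined in the lemma statement.
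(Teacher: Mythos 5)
Your proposal is correct and follows essentially the same route as the paper: the same add-and-subtract of $\EE_{a\sim\pi^*_h(\cdot|s)}[\qvalue_{k,h}(s,a)]$, the same use of the optimism inequality $\qvalue_{k,h}(s,a)\ge \reward_h^k(s,a)+[\PP_h\vvalue_{k,h+1}](s,a)$ on $\cE$ (the paper isolates this as Lemma \ref{LEMMA:TRANSITION1}, disposing of the clipping exactly as you do, via $\reward_h^k(s,a)+[\PP_h\vvalue_{k,h+1}](s,a)\le H-h+1$), and the same backward recursion telescoped under the $\pi^*$-induced trajectory.
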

Lemma \ref{LEMMA:TELESCOPE-SUM} suggests that the regret can be decomposed as the sum of the advantages at different stages $h$. Furthermore, since the initial state $s_1^k$ and the optimal policy $\pi^*$ is fixed across different episode $k$, the state-action sequence $(s_1,a_1,..,s_H,a_H)$ induced by the policy $\pi^*$ follows the same distribution across different episode $k$.

In each episode $k$, the $\algname$ update the policy $\pi_h^k$ by the following rule:
\begin{align} 
   \pi_h^{k+1}(a|s) \propto \pi_h^{k}(a|s) \exp\big\{\alpha\qvalue_{k,h}(s,a)\big\}.\notag
\end{align}
By the above update rule, we have the following lemma.
\begin{lemma}\label{LEMMA:ONE-STEP-DESCENT}
On the event $\cE$, for all $k\in[K]$, $h\in[H]$, $s\in\cS$, we have
\begin{align}
    &\EE_{a\sim \pi^*_{h}(\cdot|s)}\big[\qvalue_{k,h}(s, a)\big]-\EE_{a\sim \pi^k_{h}(\cdot|s)}\big[\qvalue_{k,h}(s, a)\big]\notag\\
    &\leq \frac{\alpha H^2}{2}+\alpha^{-1}\Big(D_{KL}\big(\pi_h^*(\cdot|s)\|\pi_h^k(\cdot|s)\big)\notag\\
    &\qquad -D_{KL}\big(\pi_h^*(\cdot|s)\|\pi_h^{k+1}(\cdot|s)\big)\Big).\notag
\end{align}
\end{lemma}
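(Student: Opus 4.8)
The plan is to recognize Lemma~\ref{LEMMA:ONE-STEP-DESCENT} as the standard one-step regret guarantee for the exponentiated-gradient (multiplicative weights) update, and to prove it by a direct Kullback--Leibler telescoping argument. Fix $k\in[K]$, $h\in[H]$ and $s\in\cS$, and abbreviate $\pi^*=\pi_h^*(\cdot|s)$, $\pi^k=\pi_h^k(\cdot|s)$, $\pi^{k+1}=\pi_h^{k+1}(\cdot|s)$ and $Q(\cdot)=\qvalue_{k,h}(s,\cdot)$. The quantity to control is $\la \pi^*, Q\ra - \la \pi^k, Q\ra$, and the key structural fact I would use is that the update in Line~\ref{algorithm:line7} has the closed form $\pi^{k+1}(a)=\pi^k(a)\exp\{\alpha Q(a)\}/Z_k$ with normalizer $Z_k=\sum_{a\in\cA}\pi^k(a)\exp\{\alpha Q(a)\}$.

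First I would establish the exact telescoping identity. Since $\log\big(\pi^{k+1}(a)/\pi^k(a)\big)=\alpha Q(a)-\log Z_k$ for every $a$, summing against $\pi^*$ gives
\begin{align}
D_{KL}\big(\pi^*\|\pi^k\big)-D_{KL}\big(\pi^*\|\pi^{k+1}\big)=\sum_{a\in\cA}\pi^*(a)\log\frac{\pi^{k+1}(a)}{\pi^k(a)}=\alpha\la\pi^*,Q\ra-\log Z_k.\notag
\end{align}
Rearranging isolates the target quantity as $\la\pi^*,Q\ra=\alpha^{-1}\big(D_{KL}(\pi^*\|\pi^k)-D_{KL}(\pi^*\|\pi^{k+1})\big)+\alpha^{-1}\log Z_k$, so it only remains to bound the log-partition term $\log Z_k$ in terms of $\la\pi^k,Q\ra$.

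For the log-partition bound I would use that the clipping $[\cdot]_{[0,H-h+1]}$ in the definition of $\qvalue_{k,h}$ forces $Q(a)=\qvalue_{k,h}(s,a)\in[0,H]$, so $Q$ is a bounded random variable under $a\sim\pi^k$. A second-order (Hoeffding-type) estimate then yields $\log Z_k=\log\EE_{a\sim\pi^k}[\exp\{\alpha Q(a)\}]\le \alpha\la\pi^k,Q\ra+\frac{1}{2}\alpha^2H^2$; dividing by $\alpha$ contributes the $\frac{\alpha H^2}{2}$ term. Combining this with the identity above gives exactly
\begin{align}
\la\pi^*,Q\ra-\la\pi^k,Q\ra\le \frac{\alpha H^2}{2}+\alpha^{-1}\Big(D_{KL}\big(\pi^*\|\pi^k\big)-D_{KL}\big(\pi^*\|\pi^{k+1}\big)\Big),\notag
\end{align}
which is the claimed inequality.

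This argument is short and largely mechanical; the only place that needs care is the log-partition bound, which is the main (mild) obstacle. One must verify the boundedness $Q\in[0,H]$ from the clipping and choose an inequality for $\EE_{a\sim\pi^k}[\exp\{\alpha Q(a)\}]$ whose remainder is at most $\frac{1}{2}\alpha^2H^2$ uniformly in $\alpha$ (Hoeffding's lemma gives the even smaller $\frac18\alpha^2H^2$ and suffices). I would also note that the identity and the bound hold deterministically given the realized $\qvalue_{k,h}$, so no further appeal to the event $\cE$ is needed beyond ensuring that $\qvalue_{k,h}$ is well defined.
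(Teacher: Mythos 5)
Your proof is correct, but it follows a genuinely different route from the paper's. The paper runs the online-mirror-descent analysis: it decomposes $\la Q,\pi^*-\pi^k\ra=\la Q,\pi^*-\pi^{k+1}\ra+\la Q,\pi^{k+1}-\pi^k\ra$, evaluates the first term exactly via the three-point identity $\alpha\la Q,\pi^*-\pi^{k+1}\ra=D_{KL}(\pi^*\|\pi^k)-D_{KL}(\pi^*\|\pi^{k+1})-D_{KL}(\pi^{k+1}\|\pi^k)$, bounds the second term by $H\|\pi^{k+1}-\pi^k\|_1$ using $0\le Q\le H$, and then combines Pinsker's inequality $D_{KL}(\pi^{k+1}\|\pi^k)\ge\tfrac12\|\pi^{k+1}-\pi^k\|_1^2$ with the elementary bound $Hx-x^2/(2\alpha)\le\alpha H^2/2$ to absorb the residual. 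You instead run the classical exponential-weights analysis: the exact identity $D_{KL}(\pi^*\|\pi^k)-D_{KL}(\pi^*\|\pi^{k+1})=\alpha\la\pi^*,Q\ra-\log Z_k$ followed by the log-partition bound $\log Z_k\le\alpha\la\pi^k,Q\ra+\alpha^2H^2/8$ from Hoeffding's lemma. Both arguments are standard and use only the closed form of the update and the deterministic bound $Q_{k,h}(s,\cdot)\in[0,H]$ coming from the clipping (so, as you note, the event $\cE$ plays no real role in either proof). Your route is shorter, avoids introducing the intermediate point $\pi^{k+1}$ and Pinsker's inequality altogether, and in fact yields the sharper constant $\alpha H^2/8$, which trivially implies the stated $\alpha H^2/2$; the paper's route has the minor advantage of making explicit the term $-D_{KL}(\pi^{k+1}\|\pi^k)$ that quantifies how far the policy moves in one step, which is sometimes reused elsewhere in mirror-descent analyses. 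One small point of care: when invoking Hoeffding's lemma you should state it for the centered variable, i.e.\ $\log\EE_{a\sim\pi^k}[e^{\alpha(Q(a)-\la\pi^k,Q\ra)}]\le\alpha^2H^2/8$, which is exactly the form you need after adding back $\alpha\la\pi^k,Q\ra$; with that phrasing the argument is airtight.
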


Substituting the result of Lemma \ref{LEMMA:ONE-STEP-DESCENT} into the result of Lemma \ref{LEMMA: CONCENTRATE}, we have the following upper bound for the term $I_1$:
\begin{align}
    I_1&\leq\frac{\alpha KH^3}{2}+\sum_{k=1}^K\alpha^{-1}\EE\bigg[\sum_{h=1}^H\Big\{D_{KL}\big(\pi_h^*(\cdot|s_h)\|\pi_h^k(\cdot|s_h)\big)\notag\\
    &\qquad -{KL}\big(\pi_h^*(\cdot|s_h)\|\pi_h^{k+1}(\cdot|s_h)\big)\Big\}\bigg]\notag\\
    &=\frac{\alpha KH^3}{2}+\alpha^{-1}\EE\bigg[\sum_{h=1}^H \Big\{D_{KL}\big(\pi_h^*(\cdot|s_h)\|\pi_h^1(\cdot|s_h)\big)\notag\\
    &\qquad-D_{KL}\big(\pi_h^*(\cdot|s_h)\|\pi_h^{K+1}(\cdot|s_h)\big)\Big\}\bigg]\notag\\
    &\leq\frac{\alpha KH^3}{2}+\alpha^{-1}\EE\bigg[\sum_{h=1}^HD_{KL}\big(\pi_h^*(\cdot|s_h)\|\pi_h^1(\cdot|s_h)\big)\bigg]\notag\\
    &\leq \frac{\alpha KH^3}{2}+\alpha^{-1} H\log |\cA|,\notag
\end{align}
where $s_1$ is the fixed initial state, $a_{h} \sim \pi^*_{h}(\cdot|s_{h}),s_{h+1} \sim \PP_h(\cdot| s_{h}, a_{h})$, the second inequality holds due to Kullback–Leibler divergence is non-negative and the third inequality holds due to the fact that initial policy $\pi_h^1$ is uniform over the action space $\cA$.

\noindent\textbf{Step 2: Bounding the term $I_2$.}


\begin{lemma}\label{LEMMA:TRANSITION}
On the event $\cE$, for all $k\in[K],h\in[H]$, we have
\begin{align}
    &\qvalue_{k,h}(s_h^k,a_h^k)-\qvalue^{\pi^k}_{k,h}(s_h^k,a_h^k)\notag\\
    &\leq \big[\PP_h(\vvalue_{k,h+1}-\vvalue_{k,h+1}^{\pi^k})\big](s_h^k,a_h^k)\notag\\
    &\qquad +2\hat{\beta}_k \bar\sigma_{k,h}\min \Big\{\big\|\hat{\bSigma}_{k,h}^{-1/2}\bphi_{\vvalue_{k,h+1}}(s_h^k,a_h^k)/\bar\sigma_{k,h}\big\|_2,1\Big\}\notag.
\end{align}
\end{lemma}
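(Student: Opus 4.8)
The plan is to expand the true value $\qvalue_{k,h}^{\pi^k}$ via its Bellman equation and control the gap to $\qvalue_{k,h}$ using the confidence set of Lemma~\ref{LEMMA: CONCENTRATE}. First I would record that, as already argued right before the lemma, on $\cE$ we have the optimism $\vvalue_{k,h+1}\ge \vvalue_{k,h+1}^{\pi^k}$ (a backward induction in $h$ that uses that the truncation fixes $\qvalue^{\pi^k}_{k,h}\in[0,H-h+1]$). Consequently $[\PP_h(\vvalue_{k,h+1}-\vvalue_{k,h+1}^{\pi^k})](s_h^k,a_h^k)\ge 0$, so the right-hand side of the claimed bound is nonnegative, which lets me dispose of the degenerate case cheaply. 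Writing $f$ for the un-truncated bracket in the definition of $\qvalue_{k,h}(s_h^k,a_h^k)$ and $y=\qvalue_{k,h}^{\pi^k}(s_h^k,a_h^k)\in[0,H-h+1]$, the operator $[\cdot]_{[0,H-h+1]}$ is nondecreasing and $1$-Lipschitz and fixes $y$; hence if $f\le y$ the left-hand side is $\le 0$ and the claim is immediate, while if $f>y$ then $\qvalue_{k,h}-y=[f]_{[0,H-h+1]}-[y]_{[0,H-h+1]}\le f-y$, reducing everything to bounding $f-y$.

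For the main case $f>y$, I would use the Bellman equation $\qvalue_{k,h}^{\pi^k}(s_h^k,a_h^k)=\reward_h^k(s_h^k,a_h^k)+[\PP_h\vvalue_{k,h+1}^{\pi^k}](s_h^k,a_h^k)$ together with the linear-mixture identity \eqref{linear-mixture-property}, i.e.\ $[\PP_h\vvalue](s_h^k,a_h^k)=\la\btheta_h,\bphi_{\vvalue}(s_h^k,a_h^k)\ra$, to write $y=\reward_h^k+\la\btheta_h,\bphi_{\vvalue_{k,h+1}^{\pi^k}}\ra$. Subtracting from $f$, the rewards cancel and, after inserting $\pm\la\btheta_h,\bphi_{\vvalue_{k,h+1}}\ra$, I obtain
\begin{align}
f-y&=\la\hat{\btheta}_{k,h}-\btheta_h,\bphi_{\vvalue_{k,h+1}}\ra+\la\btheta_h,\bphi_{\vvalue_{k,h+1}}-\bphi_{\vvalue_{k,h+1}^{\pi^k}}\ra+\hat{\beta}_k\big\|\hat{\bSigma}_{k,h}^{-1/2}\bphi_{\vvalue_{k,h+1}}\big\|_2,\notag
\end{align}
with all feature maps evaluated at $(s_h^k,a_h^k)$. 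The middle term equals $[\PP_h(\vvalue_{k,h+1}-\vvalue_{k,h+1}^{\pi^k})](s_h^k,a_h^k)$ by linearity of $V\mapsto\bphi_V$ and \eqref{linear-mixture-property}, producing the transition term in the statement. For the first term I would apply Cauchy--Schwarz in the $\hat{\bSigma}_{k,h}$-geometry, $\la\hat{\btheta}_{k,h}-\btheta_h,\bphi_{\vvalue_{k,h+1}}\ra\le\|\hat{\bSigma}_{k,h}^{1/2}(\hat{\btheta}_{k,h}-\btheta_h)\|_2\,\|\hat{\bSigma}_{k,h}^{-1/2}\bphi_{\vvalue_{k,h+1}}\|_2$, and bound the first factor by $\hat{\beta}_k$ using $\btheta_h\in\cC_{k,h}$ from \eqref{eq:variance-ucb}. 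Adding the explicit bonus then yields $f-y\le[\PP_h(\vvalue_{k,h+1}-\vvalue_{k,h+1}^{\pi^k})](s_h^k,a_h^k)+2\hat{\beta}_k\|\hat{\bSigma}_{k,h}^{-1/2}\bphi_{\vvalue_{k,h+1}}\|_2$.

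It then remains to match the $\min\{\cdot,1\}$ form of the bonus, i.e.\ to replace $\|\hat{\bSigma}_{k,h}^{-1/2}\bphi_{\vvalue_{k,h+1}}\|_2$ by $\min\{\|\hat{\bSigma}_{k,h}^{-1/2}\bphi_{\vvalue_{k,h+1}}\|_2,\bar\sigma_{k,h}\}$, where I first note the algebraic identity $\bar\sigma_{k,h}\min\{\|\hat{\bSigma}_{k,h}^{-1/2}\bphi_{\vvalue_{k,h+1}}\|_2/\bar\sigma_{k,h},1\}=\min\{\|\hat{\bSigma}_{k,h}^{-1/2}\bphi_{\vvalue_{k,h+1}}\|_2,\bar\sigma_{k,h}\}$. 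I would split on the size of the feature norm. If $\|\hat{\bSigma}_{k,h}^{-1/2}\bphi_{\vvalue_{k,h+1}}\|_2\le\bar\sigma_{k,h}$ the two expressions coincide and the previous display is exactly the claim. If instead $\|\hat{\bSigma}_{k,h}^{-1/2}\bphi_{\vvalue_{k,h+1}}\|_2>\bar\sigma_{k,h}$, I abandon the refined bound and use the crude estimate $\qvalue_{k,h}-\qvalue_{k,h}^{\pi^k}\le H-h+1\le H$ (from $\qvalue_{k,h}\le H-h+1$ and $\qvalue_{k,h}^{\pi^k}\ge0$); since $\bar\sigma_{k,h}\ge H/\sqrt d$ by Line~\ref{def:var} and $\hat{\beta}_k\ge\sqrt d/2$ (which follows from the explicit expression for $\hat{\beta}_k$), we get $2\hat{\beta}_k\bar\sigma_{k,h}\ge H$, and together with nonnegativity of the transition term this gives the claim here too.

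The hard part is exactly this last reconciliation: the Cauchy--Schwarz argument only produces the unweighted $\|\hat{\bSigma}_{k,h}^{-1/2}\bphi_{\vvalue_{k,h+1}}\|_2$, whereas the lemma asserts the smaller $\min\{\cdot,\bar\sigma_{k,h}\}$, so I must show the crude $O(H)$ bound takes over whenever the feature norm is large, and verify the numerical inequalities $\bar\sigma_{k,h}\ge H/\sqrt d$ and $\hat{\beta}_k\ge\sqrt d/2$ that make $2\hat{\beta}_k\bar\sigma_{k,h}\ge H$. Everything else — the truncation bookkeeping, the Bellman expansion, and the confidence-set Cauchy--Schwarz step — is routine once the optimism from the preceding discussion is in hand.
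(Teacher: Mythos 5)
Your proposal is correct and follows essentially the same route as the paper: expand $\qvalue_{k,h}^{\pi^k}$ via the Bellman equation, insert $\pm\la\btheta_h,\bphi_{\vvalue_{k,h+1}}\ra$, apply Cauchy--Schwarz with the confidence set from Lemma \ref{LEMMA: CONCENTRATE} to get the bound with the unweighted norm, and then recover the $\min\{\cdot,1\}$ form from the crude bound $\qvalue_{k,h}-\qvalue_{k,h}^{\pi^k}\le H\le 2\hat{\beta}_k\bar\sigma_{k,h}$ (using $\bar\sigma_{k,h}\ge H/\sqrt{d}$). Your handling is in fact slightly more careful than the paper's in two spots it leaves implicit: the monotone $1$-Lipschitz treatment of the truncation, and the explicit use of $[\PP_h(\vvalue_{k,h+1}-\vvalue_{k,h+1}^{\pi^k})]\ge 0$ (from Lemma \ref{LEMMA:TRANSITION1}) needed to merge the two bounds into the stated minimum.
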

Lemma \ref{LEMMA:TRANSITION} suggests that the difference between the true action-value function $\qvalue^{\pi^k}_{k,h}(s_h^k,a_h^k)$ and the estimated value function $\qvalue_{k,h}(s_h^k,a_h^k)$ can be bounded by the expected difference at the next-stage difference. Furthermore, for the expected difference at the next-stage and the exact difference at the next-stage, we have the following equation
\begin{align}
    &\big[\PP_h(\vvalue_{k,h+1}-\vvalue_{k,h+1}^{\pi^k})\big](s_h^k,a_h^k)\notag\\
    &=\qvalue_{k,h+1}(s_{h+1}^k,a_{h+1}^k)-\qvalue_{k,h+1}^{\pi^k}(s_{h+1}^k,a_{h+1}^k)\notag\\
    &\qquad +A_{h,k}+B_{h+1,k} ,\notag
\end{align}
where $A_{h,k}=[\PP_h(\vvalue_{k,h+1}-\vvalue_{k,h+1}^{\pi^k})\big](s_h^k,a_h^k)-\big(\vvalue_{k,h+1}(s_{h+1}^{k})-\vvalue_{k,h+1}^{\pi^k}(s_{h+1}^{k})$ is the noise from the state transition and $B_{h,k}=\EE_{a\sim \pi_h^k(\cdot|s_h^k)}\big[\qvalue_{k,h}(s_h^k,a)-\qvalue_{k,h}^{\pi^k}(s_h^k,a)\big]-\big(\qvalue_{k,h}(s_h^k,a_h^k)-\qvalue_{k,h}^{\pi^k}(s_h^k,a_h^k\big)$ is the noise from the stochastic policy. These noises form a martingale difference sequence and we define two high probability events for them:
\begin{align}
    \cE_1&=\bigg\{\forall h\in[H], \sum_{k=1}^K\sum_{h'=h}^{H}A_{h',k}+B_{h',k}\notag\\
    &\qquad \leq 4H\sqrt{T\log(H/\delta)} \bigg\},\notag\\
    \cE_2&=\bigg\{\sum_{k=1}^K\sum_{h=1}^{H}A_{h,k}\leq 2H\sqrt{2T\log(1/\delta)}\bigg\}.\notag
\end{align}
Then according to the Azuma–Hoeffding inequality, we have $\Pr(\cE_1)\ge 1-\delta$ and $\Pr(\cE_2)\ge 1-\delta$. Furthermore, on the events $\cE_1$ and $\cE_2$, we can telescope the inequality in Lemma \ref{LEMMA:TRANSITION} over the $K$ episodes, and obtain the following lemma. 
\begin{lemma}\label{LEMMA:TRANSITION2}
On the event $\cE\cap\cE_1\cap \cE_2$, for all $h\in[H]$, we have
\begin{align}
    &\sum_{k=1}^K\big(\vvalue_{k,h}(s_h^k)-\vvalue_{k,h}^{\pi^k}(s_h^k)\big)\notag\\
    &\leq 2\hat{\beta}_K\sqrt{\sum_{k=1}^K\sum_{h=1}^H\bar\sigma_{k,h}^2}\sqrt{2Hd\log(1+K/\lambda)}\notag\\
    &\quad+4H\sqrt{T\log (H/\delta)},\notag
\end{align}
and
\begin{align}
    &\sum_{k=1}^K\sum_{h=1}^H \big[\PP_h(\vvalue_{k,h+1}-\vvalue_{k,h+1}^{\pi^k})\big](s_h^k,a_h^k)\notag\\
    &\leq 2H\hat{\beta}_K\sqrt{\sum_{k=1}^K\sum_{h=1}^H\bar\sigma_{k,h}^2}\sqrt{2Hd\log(1+K/\lambda)}\notag\\
    &\quad+4H^2\sqrt{T\log (H/\delta)}.\notag
\end{align}
\end{lemma}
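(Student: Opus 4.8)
The plan is to turn the per-step bound of Lemma~\ref{LEMMA:TRANSITION} into a bound on the accumulated value gaps by first telescoping over the horizon within each episode and then summing over episodes. Write $\delta_{k,h} = \qvalue_{k,h}(s_h^k,a_h^k) - \qvalue_{k,h}^{\pi^k}(s_h^k,a_h^k)$ and let $b_{k,h} = 2\hat{\beta}_k\bar\sigma_{k,h}\min\{\|\hat{\bSigma}_{k,h}^{-1/2}\bphi_{\vvalue_{k,h+1}}(s_h^k,a_h^k)/\bar\sigma_{k,h}\|_2,1\}$ denote the bonus on the right-hand side of Lemma~\ref{LEMMA:TRANSITION}. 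Combining that lemma with the recursive identity $[\PP_h(\vvalue_{k,h+1}-\vvalue_{k,h+1}^{\pi^k})](s_h^k,a_h^k) = \delta_{k,h+1} + A_{h,k} + B_{h+1,k}$ stated above the lemma gives the one-step inequality $\delta_{k,h}\le \delta_{k,h+1} + A_{h,k} + B_{h+1,k} + b_{k,h}$. Since $\vvalue_{k,H+1}=\vvalue_{k,H+1}^{\pi^k}=0$, unrolling this recursion from stage $h$ to stage $H$ yields $\delta_{k,h}\le \sum_{h'=h}^{H}(A_{h',k}+B_{h'+1,k}+b_{k,h'})$.

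For the first inequality, I would use the identity $\vvalue_{k,h}(s_h^k)-\vvalue_{k,h}^{\pi^k}(s_h^k) = \delta_{k,h}+B_{h,k}$, which is immediate from the definition of $B_{h,k}$ together with the fact that both value functions are the $\pi_h^k$-expectation of the corresponding $Q$-functions. Summing over $k\in[K]$ and grouping into a martingale part and a bonus part, the martingale part collects exactly $\sum_{k=1}^K\sum_{h'=h}^H(A_{h',k}+B_{h',k})$ after using $B_{H+1,k}=0$, which is bounded by $4H\sqrt{T\log(H/\delta)}$ on the event $\cE_1$. The bonus part $\sum_{k=1}^K\sum_{h'=h}^H b_{k,h'}$ is controlled by first replacing $\hat{\beta}_k$ with $\hat{\beta}_K$ (valid since $\hat{\beta}_k$ is nondecreasing in $k$), then applying Cauchy--Schwarz to split off $\sqrt{\sum_{k,h}\bar\sigma_{k,h}^2}$, and finally applying the elliptical potential lemma \citep{abbasi2011improved} to bound the remaining factor $\sqrt{\sum_{k,h}\min\{\|\hat{\bSigma}_{k,h}^{-1/2}\bphi_{\vvalue_{k,h+1}}(s_h^k,a_h^k)/\bar\sigma_{k,h}\|_2^2,1\}}\le \sqrt{2Hd\log(1+K/\lambda)}$; this crucially uses that the rank-one update in Line~\ref{algorithm:line18} adds precisely $\bphi_{\vvalue_{k,h+1}}(s_h^k,a_h^k)/\bar\sigma_{k,h}$ to $\hat{\bSigma}_{k,h}$. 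Combining the two parts gives the first claimed bound.

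For the second inequality, I would instead expand $[\PP_h(\vvalue_{k,h+1}-\vvalue_{k,h+1}^{\pi^k})](s_h^k,a_h^k) = (\vvalue_{k,h+1}(s_{h+1}^k)-\vvalue_{k,h+1}^{\pi^k}(s_{h+1}^k)) + A_{h,k}$ directly from the definition of $A_{h,k}$, and sum over $k\in[K]$ and $h\in[H]$. The $A$-terms sum to at most $2H\sqrt{2T\log(1/\delta)}$ on $\cE_2$, a lower-order contribution that is absorbed into the final noise term. Reindexing $h+1\mapsto h$ and using the optimism $\vvalue_{k,h}\ge \vvalue_{k,h}^{\pi^k}$ on $\cE$ (so the vanishing stage-$(H+1)$ term can be dropped and the missing stage-$1$ term added without decreasing the sum), the remaining double sum is at most $\sum_{h=1}^H\sum_{k=1}^K(\vvalue_{k,h}(s_h^k)-\vvalue_{k,h}^{\pi^k}(s_h^k))$, to which I apply the first inequality for each $h$ and then sum; this introduces the extra factor $H$ and produces the second claimed bound.

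The main obstacle I anticipate is the bonus-accumulation step: one must keep the variance normalizer $\sqrt{\sum_{k,h}\bar\sigma_{k,h}^2}$ intact rather than crudely upper bounding each $\bar\sigma_{k,h}$, since it is exactly this quantity that a later total-variance (law of total variance) argument will bound by $\tilde O(\sqrt{HT})$ to deliver the improved $\sqrt{H}$ factor over \citet{cai2020provably}. Getting this right requires applying the elliptical potential lemma to the correctly \emph{weighted} feature vectors $\bphi_{\vvalue_{k,h+1}}/\bar\sigma_{k,h}$ that drive the covariance update, and careful bookkeeping to verify that the martingale indices produced by the telescoping match exactly the sums controlled by $\cE_1$ and $\cE_2$.
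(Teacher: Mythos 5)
Your proposal is correct and follows essentially the same route as the paper: telescoping the per-step bound of Lemma~\ref{LEMMA:TRANSITION} via the $A_{h,k}$, $B_{h,k}$ decomposition, controlling the martingale sums with $\cE_1$ and $\cE_2$, bounding the bonus accumulation by Cauchy--Schwarz against $\sqrt{\sum_{k,h}\bar\sigma_{k,h}^2}$ followed by the elliptical potential lemma on the $\bar\sigma$-weighted features, and deriving the second inequality from the first by reindexing and optimism. The only difference is cosmetic bookkeeping (your explicit $\delta_{k,h}$ recursion versus the paper's inline chain of inequalities), and you correctly identify the key point of keeping the weighted variance factor intact.
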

Lemma \ref{LEMMA:TRANSITION2} shows that the regret can be upper bounded by the total estimated variance $\sum_{k=1}^K\sum_{h=1}^H\bar\sigma_{k,h}^2$. 
For the total variance $\sum_{k=1}^K\sum_{h=1}^H [\VV_h \vvalue_{k,h+1}^{\pi^k}](s_h^k,a_h^k)$, we introduce the high probability event $\cE_3$:
\begin{small}
\begin{align}
    \cE_3=\Big\{\sum_{k=1}^K\sum_{h=1}^H [\VV_h \vvalue_{k,h+1}^{\pi^k}](s_h^k,a_h^k)\leq 3HT+3H^3\log(1/\delta)\Big\}.\notag
\end{align}
\end{small}
Since the stochastic noise in the linear regression only comes from the stochastic transition probability $\PP_h$ rather than the adversarial reward, and $\PP_h$ is fixed across different episodes, Lemma C.5 in \citet{jin2018q} suggests that $\Pr(\cE_3)\ge 1-\delta$ and on the event $\cE\cap\cE_1\cap\cE_2\cap\cE_3$, the following lemma gives a upper bound of the total estimated variance.
\begin{lemma}\label{LEMMA: TOTAL-ESTIMATE-VARIANCE}
On the event $\cE\cap\cE_1\cap\cE_2\cap\cE_3$, we have
\begin{align}
    &\sum_{k=1}^K\sum_{h=1}^H\bar\sigma_{k,h}^{2}\notag \leq 2HT/d+179HT\notag\\
    &\quad +165d^3H^4 \log^2(4K^2H/\delta)\log^2(1+KH^4/\lambda)\notag\\
    &\quad +2062H^5d^2 \log^2(4K^2H/\delta)\log^2(1+K/\lambda)\notag.
\end{align}
\end{lemma}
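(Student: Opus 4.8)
The plan is to reduce the lemma to a self-bounding inequality in the single quantity $X:=\sum_{k=1}^K\sum_{h=1}^H\bar\sigma_{k,h}^2$ and then solve it. First, since $\bar\sigma_{k,h}^2=\max\{H^2/d,\,[\bar\VV_{k,h}\vvalue_{k,h+1}](s_h^k,a_h^k)+E_{k,h}\}$ with both arguments of the max nonnegative, I would bound the max by the sum, and then invoke Lemma~\ref{LEMMA: CONCENTRATE} (which holds on $\cE$) to trade the estimated variance for the true variance of the \emph{optimistic} value function, obtaining the pointwise bound $\bar\sigma_{k,h}^2\le H^2/d+[\VV_h\vvalue_{k,h+1}](s_h^k,a_h^k)+2E_{k,h}$. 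Summing the floor term gives a lower-order contribution of order $TH^2/d$.

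The central step is to relate the variance of the optimistic value function $\vvalue_{k,h+1}$ to that of the true policy value function $\vvalue_{k,h+1}^{\pi^k}$, since only the latter is controlled (on $\cE_3$). Using the elementary subadditivity $[\VV_h(f+g)]\le 2[\VV_h f]+2[\VV_h g]$ together with the optimism $0\le\vvalue_{k,h+1}-\vvalue_{k,h+1}^{\pi^k}\le H$ (a consequence of Lemma~\ref{LEMMA: CONCENTRATE}), I would write $[\VV_h\vvalue_{k,h+1}]\le 2[\VV_h\vvalue_{k,h+1}^{\pi^k}]+2H\,[\PP_h(\vvalue_{k,h+1}-\vvalue_{k,h+1}^{\pi^k})]$, where the last step uses $(\vvalue_{k,h+1}-\vvalue_{k,h+1}^{\pi^k})^2\le H(\vvalue_{k,h+1}-\vvalue_{k,h+1}^{\pi^k})$. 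Summing over $k,h$, the first term is bounded on $\cE_3$ by $3HT+3H^3\log(1/\delta)$, while the second term is \emph{exactly} what the second inequality of Lemma~\ref{LEMMA:TRANSITION2} controls --- and that bound already carries the factor $\sqrt{\sum_{k,h}\bar\sigma_{k,h}^2}=\sqrt{X}$, producing the self-reference.

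For the remaining $2\sum_{k,h}E_{k,h}$ I would treat the two summands of $E_{k,h}$ separately. Via the truncation trick $\min\{\beta\|\cdot\|,H^2\}\le\beta\min\{\|\cdot\|,1\}$ (valid because $\tilde\beta_k,\,2H\bar\beta_k\ge H^2$), Cauchy--Schwarz, and the elliptical potential lemma: the $\tilde\bSigma$-part (built from $\bphi_{\vvalue^2}$, with $\tilde\bSigma$ \emph{unweighted}) contributes a term of order $H\tilde\beta_K\sqrt{Kd\log(1+KH^4/(d\lambda))}$, i.e.\ a $\sqrt{T}$-order term; whereas the $\hat\bSigma$-part (built from $\bphi_{\vvalue}$) --- because $\hat\bSigma_{k,h}$ is the \emph{variance-weighted} covariance --- factors through $\bar\sigma_{k,h}$ and, by the weighted elliptical potential, again yields a $\sqrt{X}$ term of the form $H^{1.5}\bar\beta_K\sqrt{d\log}\,\sqrt{X}$.

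Collecting everything gives an inequality of the shape $X\le A+B\sqrt{X}$, where $A$ gathers the floor sum $TH^2/d$, the $\cE_3$ bound, the additive constants of Lemma~\ref{LEMMA:TRANSITION2}, and the $\sqrt{T}$-order term from the $\tilde\bSigma$-part, and $B$ gathers the two $\sqrt{X}$ coefficients (of order $H^2\hat\beta_K\sqrt{Hd\log}$ and $H^{1.5}\bar\beta_K\sqrt{d\log}$). I would finish by solving the quadratic-type inequality through $X\le 2A+2B^2$, applying the AM--GM splitting $H^{2.5}d\sqrt{T}=\sqrt{(HT)(H^4d^2)}\le\tfrac12 HT+\tfrac12 H^4d^2$ to absorb the $\sqrt{T}$-order pieces of $A$ into the linear-in-$T$ term plus a constant, and substituting $\hat\beta_K\sim\sqrt d$, $\bar\beta_K\sim d$, $\tilde\beta_K\sim H^2\sqrt d$ so that $B^2$ collapses to the stated $d^3H^4$ and $H^5d^2$ (times log) constants. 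I expect the main obstacle to be the bookkeeping of the self-referential $\sqrt{X}$ terms --- in particular, confirming that the \emph{weighted} elliptical potential genuinely returns a factor of $\sqrt{X}$ rather than $\sqrt{T}$, and that after solving the inequality the residual constants indeed reduce to the $d^3H^4$ and $H^5d^2$ forms while the $\sqrt{T}$ terms merge cleanly into $HT$.
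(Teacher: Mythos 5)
Your proposal is correct and follows essentially the same route as the paper's proof: the same decomposition of $\bar\sigma_{k,h}^2$ into the floor term, the estimation error (traded for $E_{k,h}$ on $\cE$), the gap $[\VV_h\vvalue_{k,h+1}]-[\VV_h\vvalue_{k,h+1}^{\pi^k}]$ reduced to $2H\,[\PP_h(\vvalue_{k,h+1}-\vvalue_{k,h+1}^{\pi^k})]$ and controlled by the second inequality of Lemma~\ref{LEMMA:TRANSITION2}, the $\cE_3$ bound on the policy variance, elliptical-potential bounds on $\sum E_{k,h}$, and the final resolution of $X\le A+B\sqrt X$. The only (immaterial) deviations are that you use variance subadditivity where the paper uses monotonicity $\vvalue^{\pi^k}_{k,h+1}\le\vvalue_{k,h+1}$ directly, and that you keep the weighted-bonus sum as a $\sqrt X$ term where the paper converts it to a $\sqrt T$ term via $\bar\sigma_{k,h}\le\sqrt3 H$; also note the truncation for the $\hat\bSigma$-part should be justified via $2H\bar\beta_k\bar\sigma_{k,h}\ge H^2$ (using $\bar\sigma_{k,h}\ge H/\sqrt d$ and $\bar\beta_k\gtrsim\sqrt d$) rather than $2H\bar\beta_k\ge H^2$, which is what your "factors through $\bar\sigma_{k,h}$" step implicitly does anyway.
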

Combining the results of Lemmas \ref{LEMMA:TRANSITION2} and \ref{LEMMA: TOTAL-ESTIMATE-VARIANCE}, the term $I_2$ can be upper bounded by
\begin{align}
    I_2= \tilde{O}(\sqrt{dH^3T}+\sqrt{d^2H^2T}+d^2H^3+d^{2.5}H^{2.5})\notag.
\end{align}
Finally, combining the upper bounds of $I_1$ and $I_2$, we have
\begin{align*}
    \text{Regret}(K)&=I_1+I_2\notag\\
    &=\tilde{O}(\sqrt{dH^3T}+\sqrt{d^2H^2T}+d^2H^3+d^{2.5}H^{2.5}\notag\\
    &\qquad +\alpha H^2+\alpha^{-1}H \log|\cA|).
\end{align*}

\section{EXPERIMENTS}

In this section, we carry out experiments to evaluate the empirical performance of our algorithm $\algname$.

In this experiment, we construct the MDP $M$ with dimension $d=5$ and episode length $H=20$. In this MDP, the state space $\cS$ consists of $H+2$ different states $s_1,..,s_{H+2}$ and the action space $\cA=\{-1,1\}^{d-1}$ consists of $2^{d-1}$ different actions. For each stage $h\in [H]$ and episode $k\in[K]$, the adversarial reward function $\reward_h^k$ satisfies that $\reward_h^k(s_h,\ba)=0,\ (1\leq h\leq H+1)$ and $\reward_h^k(s_{H+2},\ba)=1$. For each stage $h\in[H]$ and corresponding transition probability function $\PP_h$, $s_{H+1}$ and $s_{H+2}$ are absorbing states. For other states $s_h (1\leq h\leq H)$, the transition probability satisfies that
\begin{align}
    &\PP_h(s_{h+1}|s_h,\ab)=0.95-\la 0.01\cdot\textbf{1}_{d-1} ,\ab\ra,\notag\\
    &\PP_h(s_{H+2}|s_h,\ab)=0.05+\la 0.01\cdot\textbf{1}_{d-1} ,\ab\ra\notag,
\end{align}
where each $\textbf{1}_{d-1}$ is a $(d-1)$-dimensional vector of all ones. In the experiment, we set the regularization parameter $\lambda=1$, learning rate $\alpha=1$ and use grid search to select the parameters $\hat \beta$, $\tilde \beta$, $\bar \beta$ which obtain the best performance. We compare our algorithm with two baseline benchmarks : \textbf{Random} (uniformly and randomly choose actions from $\cA$) and \textbf{OPPO} \citep{cai2020provably}. The regrets of different algorithms for the first $1000$ episodes 
averaged over $20$ runs are plotted in Figure \ref{fig:1}.
\begin{figure}[htbp]
  \begin{center}
    \includegraphics[width=0.45\textwidth]{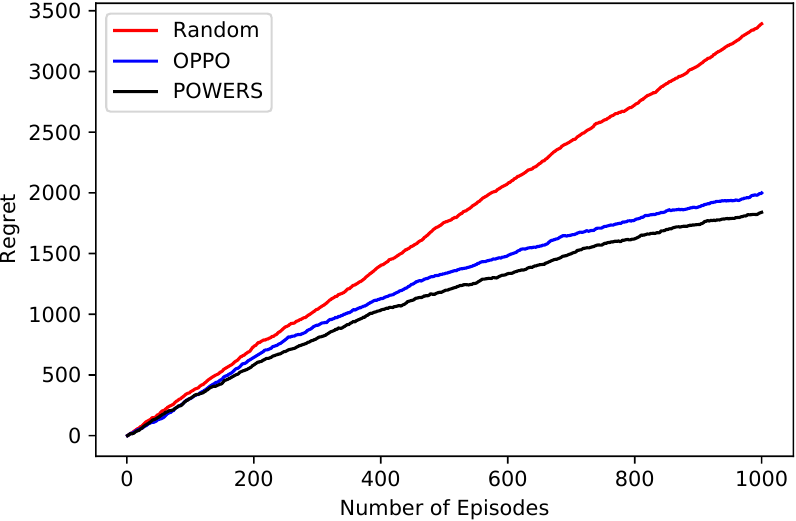}
  \end{center}
   \caption{Cumulative regret comparison in the first $1000$ episodes of different algorithms. Results are averaged over $20$ runs.}
  \label{fig:1}
\end{figure}

From Figure \ref{fig:1}, it can be seen that both OPPO \citep{cai2020provably} and $\algname$ obtain a sub-linear cumulative regret. Moreover, we can see that $\algname$ outperforms the OPPO algorithm. This is consistent with our  theoretical result in Theorem \ref{THM:1}, which suggests that $\algname$ has a better regret bound than OPPO thanks to the use of Bernstein-type bonus.

\section{CONCLUSION AND FUTURE WORK}\label{sec:conclusion}
In this work, we considered learning adversarial Markov decision processes under the linear mixture MDP assumption. We proposed a novel algorithm $\algname$ and proved that with high probability, the regret of $\algname$ is upper bounded by $\tilde{O}(dH\sqrt{T})$, which matches the lower bound up to logarithmic factors. Currently, our work requires the full information feedback of the reward and it remains an open problem that if there exists a provably efficient algorithm for learning adversarial linear mixture MDPs with bandit-feedback on the reward. We leave it as future work.



\section*{Acknowledgments}
We thank the anonymous reviewers for their helpful comments. 
Part of this work was done when JH, DZ and QG participated the Theory of Reinforcement Learning program at the Simons Institute for the Theory of Computing in Fall 2020. JH, DZ and QG are partially supported by the National Science Foundation CAREER Award 1906169, IIS-1904183, and AWS Machine Learning Research Award. The views and conclusions contained in this paper are those of the authors and should not be interpreted as representing any funding agencies.

\bibliographystyle{ims}
\bibliography{reference}


\clearpage
\appendix

\thispagestyle{empty}

\onecolumn \makesupplementtitle

\section{PROOF OF THE MAIN RESULTS}\label{section: proof of main}
\subsection{Proof of Theorem \ref{THM:1}}
In this section, we provide the proof of Theorems \ref{THM:1} and first propose the following lemmas.
\begin{lemma}[Azuma–Hoeffding inequality, \citealt{cesa2006prediction}]\label{lemma:azuma}
Let $\{x_i\}_{i=1}^n$ be a martingale difference sequence with respect to a filtration $\{\cG_{i}\}$ satisfying $|x_i| \leq M$ for some constant $M$, $x_i$ is $\cG_{i+1}$-measurable, $\EE[x_i|\cG_i] = 0$. Then for any $0<\delta<1$, with probability at least $1-\delta$, we have 
\begin{align}
    \sum_{i=1}^n x_i\leq M\sqrt{2n \log (1/\delta)}.\notag
\end{align} 
\end{lemma}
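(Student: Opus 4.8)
The plan is to prove this one-sided deviation bound by the standard exponential (Chernoff) method, combined with a conditional version of Hoeffding's lemma that exploits the martingale structure through iterated conditioning. First I would introduce a free parameter $\lambda > 0$ and apply Markov's inequality to the exponentiated partial sum: for any threshold $t > 0$,
\[
\Pr\bigg(\sum_{i=1}^n x_i \geq t\bigg) \leq e^{-\lambda t}\, \EE\bigg[\exp\bigg(\lambda \sum_{i=1}^n x_i\bigg)\bigg].
\]
This reduces the problem to controlling the moment generating function of $\sum_{i=1}^n x_i$, after which I will optimize over $\lambda$ at the end.

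The key step is to bound the MGF by peeling off one increment at a time. Since $x_n$ is $\cG_{n+1}$-measurable and $\sum_{i=1}^{n-1} x_i$ is $\cG_n$-measurable, the tower property gives
\[
\EE\bigg[\exp\bigg(\lambda \sum_{i=1}^n x_i\bigg)\bigg] = \EE\bigg[\exp\bigg(\lambda \sum_{i=1}^{n-1} x_i\bigg)\,\EE\big[e^{\lambda x_n} \mid \cG_n\big]\bigg].
\]
Here I would invoke Hoeffding's lemma in conditional form: because $\EE[x_n \mid \cG_n] = 0$ and $x_n$ lies in an interval of length at most $2M$ (as $|x_n| \leq M$), the conditional log-MGF obeys $\EE[e^{\lambda x_n} \mid \cG_n] \leq \exp(\lambda^2 M^2 / 2)$. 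Substituting this deterministic bound and iterating the same argument down to $i = 1$ yields $\EE[\exp(\lambda \sum_{i=1}^n x_i)] \leq \exp(n \lambda^2 M^2 / 2)$.

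Combining the two displays gives $\Pr(\sum_{i=1}^n x_i \geq t) \leq \exp(-\lambda t + n\lambda^2 M^2 / 2)$, and choosing the optimal $\lambda = t/(nM^2)$ produces the sub-Gaussian tail $\exp(-t^2 / (2nM^2))$. Setting this equal to $\delta$ and solving for $t$ gives the stated threshold $t = M\sqrt{2n \log(1/\delta)}$, so with probability at least $1-\delta$ the sum is at most $M\sqrt{2n \log(1/\delta)}$. The only genuine obstacle is the conditional per-step MGF bound; the rest is routine Chernoff packaging and a one-line optimization. I would establish that bound either by citing Hoeffding's lemma directly, or self-containedly by noting that a mean-zero random variable supported in $[-M, M]$ is $M$-sub-Gaussian, which follows from the convexity of $x \mapsto e^{\lambda x}$ together with a second-order estimate of the cumulant generating function.
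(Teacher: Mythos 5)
Your proof is correct. Note that the paper does not actually prove this lemma---it is stated as a known result with a citation to \citet{cesa2006prediction}---so there is no in-paper argument to compare against. What you give is the standard derivation of the one-sided Azuma--Hoeffding bound: Chernoff's method, the conditional Hoeffding lemma $\EE[e^{\lambda x_i}\mid\cG_i]\le e^{\lambda^2M^2/2}$ (valid since $x_i$ is conditionally mean-zero and supported in an interval of length $2M$), iterated peeling via the tower property, and optimization at $\lambda=t/(nM^2)$, which indeed yields the threshold $M\sqrt{2n\log(1/\delta)}$. All steps check out; nothing is missing.
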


\begin{lemma}\label{LEMMA:TRANSITION1}
On the event $\cE$, for all $k\in[K],h\in[H],s\in \cS,a\in \cA$, we have
\begin{align}
    \qvalue_{k,h}(s,a)&\ge 
    \reward_h^k(s,a)+\big[\PP_h\vvalue_{k,h+1}\big](s,a).\notag
\end{align}
Furthermore, on the event $\cE$, for all $k\in[K],h\in[H],s\in \cS,a\in \cA$, we have
\begin{align}
    \qvalue_{k,h}(s,a)-\qvalue^{\pi^k}_{k,h}(s,a)&\ge 0,\vvalue_{k,h}(s)-\vvalue^{\pi^k}_{k,h}(s)\ge 0.\notag
\end{align}
\end{lemma}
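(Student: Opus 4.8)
The plan is to prove the two displays in sequence. The first inequality is a pure one-step optimism statement: the estimated $\qvalue_{k,h}$ dominates the Bellman backup of $\vvalue_{k,h+1}$ under the \emph{true} transition. The second assertion, that $\qvalue_{k,h}$ and $\vvalue_{k,h}$ dominate the true $\qvalue_{k,h}^{\pi^k}$ and $\vvalue_{k,h}^{\pi^k}$, then follows by a backward induction on $h$ that feeds the first inequality through the monotonicity of $\PP_h$. All probabilistic content is already absorbed into Lemma \ref{LEMMA: CONCENTRATE}; on the event $\cE$ nothing more than $\btheta_h\in\cC_{k,h}$ and elementary monotonicity is required.

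First I would establish $\qvalue_{k,h}(s,a)\ge \reward_h^k(s,a)+[\PP_h\vvalue_{k,h+1}](s,a)$. The key observation is that the bonus exactly realizes the supremum of the linear functional over the confidence ellipsoid. Writing $\bphi$ for $\bphi_{\vvalue_{k,h+1}}(s,a)$, for any $\btheta\in\cC_{k,h}$ we have $\la\bphi,\btheta\ra=\la\bphi,\hat{\btheta}_{k,h}\ra+\la\hat{\bSigma}_{k,h}^{-1/2}\bphi,\hat{\bSigma}_{k,h}^{1/2}(\btheta-\hat{\btheta}_{k,h})\ra\le\la\bphi,\hat{\btheta}_{k,h}\ra+\hat{\beta}_k\|\hat{\bSigma}_{k,h}^{-1/2}\bphi\|_2$ by Cauchy--Schwarz and the definition of $\cC_{k,h}$, with equality attainable, so $\la\hat{\btheta}_{k,h},\bphi\ra+\hat{\beta}_k\|\hat{\bSigma}_{k,h}^{-1/2}\bphi\|_2=\max_{\btheta\in\cC_{k,h}}\la\bphi,\btheta\ra$. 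On $\cE$, Lemma \ref{LEMMA: CONCENTRATE} gives $\btheta_h\in\cC_{k,h}$, hence this maximum is at least $\la\bphi,\btheta_h\ra=[\PP_h\vvalue_{k,h+1}](s,a)$ by \eqref{linear-mixture-property}; this is exactly the chain already displayed after \eqref{eq:variance-ucb}. What remains is to remove the outer truncation: since $\qvalue_{k,h+1}$ is clipped to $[0,H-h]$ in its definition, we have $\vvalue_{k,h+1}(s)\in[0,H-h]$ for all $s$, so $[\PP_h\vvalue_{k,h+1}](s,a)\in[0,H-h]$ and therefore $\reward_h^k(s,a)+[\PP_h\vvalue_{k,h+1}](s,a)\in[0,H-h+1]$. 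Because the map $[\cdot]_{[0,H-h+1]}$ is monotone and acts as the identity on this interval, the clipping may be dropped on the right-hand side, giving the claim.

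Then I would prove the second part by backward induction on $h$ from $H+1$ down to $1$. The base case is immediate since $\vvalue_{k,H+1}\equiv 0\equiv \vvalue_{k,H+1}^{\pi^k}$. For the inductive step, assume $\vvalue_{k,h+1}(s)\ge\vvalue_{k,h+1}^{\pi^k}(s)$ for all $s$. The Bellman equality gives $\qvalue_{k,h}^{\pi^k}(s,a)=\reward_h^k(s,a)+[\PP_h\vvalue_{k,h+1}^{\pi^k}](s,a)$, while Part~1 gives $\qvalue_{k,h}(s,a)\ge\reward_h^k(s,a)+[\PP_h\vvalue_{k,h+1}](s,a)$; since $\PP_h$ is a nonnegative linear operator it is monotone, so $[\PP_h\vvalue_{k,h+1}](s,a)\ge[\PP_h\vvalue_{k,h+1}^{\pi^k}](s,a)$ and subtracting yields $\qvalue_{k,h}(s,a)\ge\qvalue_{k,h}^{\pi^k}(s,a)$. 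Taking the expectation over $a\sim\pi_h^k(\cdot|s)$, which is the common policy in both $\vvalue_{k,h}$ and $\vvalue_{k,h}^{\pi^k}$, gives $\vvalue_{k,h}(s)\ge\vvalue_{k,h}^{\pi^k}(s)$, closing the induction. The only genuinely delicate step is the truncation bookkeeping in Part~1: one must check that the untruncated backup already lies in $[0,H-h+1]$ before dropping the clipping, which rests on the range bound $\vvalue_{k,h+1}\le H-h$ inherited from the definition of $\qvalue_{k,h+1}$; everything else is a direct consequence of $\btheta_h\in\cC_{k,h}$ and the monotonicity of $\PP_h$.
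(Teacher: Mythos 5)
Your proposal is correct and follows essentially the same route as the paper's proof: the Cauchy--Schwarz/confidence-ellipsoid argument for the one-step optimism (including the same truncation bookkeeping via $\reward_h^k+[\PP_h\vvalue_{k,h+1}]\le H-h+1$), followed by backward induction on $h$ using monotonicity of $\PP_h$ and the Bellman equality for $\pi^k$.
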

\begin{proof}[Proof of Lemma \ref{LEMMA:TRANSITION1}]
For the lower bound of $\qvalue_{k,h}(s,a)-\qvalue^{\pi^k}_{k,h}(s,a)$, we have
\begin{align}
    &\reward_h^k(s,a)+\big\la\hat{\btheta}_{k,h},\bphi_{\vvalue_{k,h+1}}(s,a)\big\ra+\hat{\beta}_k\big\|\hat{\bSigma}_{k,h}^{-1/2}\bphi_{\vvalue_{k,h+1}}(s,a)\big\|_2\notag\\
    &=\reward_h^k(s,a)+\big[\PP_h\vvalue_{k,h+1}\big](s,a)+\big\la\hat{\btheta}_{k,h}-\btheta_h,\bphi_{\vvalue_{k,h+1}}(s,a)\big\ra+\hat{\beta}_k\big\|\hat{\bSigma}_{k,h}^{-1/2}\bphi_{\vvalue_{k,h+1}}(s,a)\big\|_2\notag\\
    &\ge \reward_h^k(s,a)+\big[\PP_h\vvalue_{k,h+1}\big](s,a)+ \hat{\beta}_k\big\|\hat{\bSigma}_{k,h}^{-1/2}\bphi_{\vvalue_{k,h+1}}(s,a)\big\|_2-\big\|\hat{\bSigma}_{k,h}^{1/2}(\btheta-\hat{\btheta}_{k,h})\big\|_2\big\|\hat{\bSigma}_{k,h}^{-1/2}\bphi_{\vvalue_{k,h+1}}(s,a)\big\|_2\notag\\
    &\ge \reward_h^k(s,a)+\big[\PP_h\vvalue_{k,h+1}\big](s,a),\label{eq:51}
\end{align}
where the first inequality holds due to Cauchy-Schwarz inequality and the second inequality holds due to the definition of event $\cE$.
Therefore, we have
\begin{align}
    \qvalue_{k,h}(s,a)&= \Big[\reward_h^k(s,a)+\big\la\hat{\btheta}_{k,h},\bphi_{\vvalue_{k,h+1}}(s,a)\big\ra+\hat{\beta}_k\big\|\hat{\bSigma}_{k,h}^{-1/2}\bphi_{\vvalue_{k,h+1}}(s,a)\big\|_2\Big]_{[0,H-h+1]}\notag\\
   &\ge \min\Big\{\reward_h^k(s,a)+\big[\PP_h\vvalue_{k,h+1}\big](s,a),H-h+1\Big\}\notag\\
    &\ge \reward_h^k(s,a)+\big[\PP_h\vvalue_{k,h+1}\big](s,a),\label{eq:52}
\end{align}
where the first inequality holds due to \eqref{eq:51} and the second inequality holds due to $\reward_h^k(s,a)+\big[\PP_h\vvalue_{k,h+1}\big](s,a)\leq 1+ (H-h)=H-h+1$.
Now, we prove the second part of Lemma \ref{LEMMA:TRANSITION1} by induction. The statement holds for stage $h=H+1$, since
\begin{align}
    \vvalue_{k,h}(s)=\vvalue^{\pi^k}_{k,h}(s)=0.\notag
\end{align}
When the second part of Lemma \ref{LEMMA:TRANSITION1} holds for stage $h+1$, we have
\begin{align}
   \qvalue_{k,h}(s,a)&\ge  \reward_h^k(s,a)+\big[\PP_h\vvalue_{k,h+1}\big](s,a)\notag\\
   &\ge \reward_h^k(s,a)+\big[\PP_h\vvalue^{\pi^k}_{k,h+1}\big](s,a)\\
   &=\qvalue^{\pi^k}(s,a),\notag
\end{align}
where the first inequality holds due to \eqref{eq:52} and the second inequality holds due to the induction assumption. Furthermore, we have
\begin{align}
    \vvalue_{k,h}(s)=\EE_{a\sim \pi_h^k(\cdot|s)}[\qvalue_{k,h}(s,a)]\ge \EE_{a\sim \pi_h^k(\cdot|s)}[\qvalue^{\pi^k}_{k,h}(s,a)]=\vvalue^{\pi^k}_{k,h}(s).\notag
\end{align}
Therefore, we finish the induction step and complete the proof of Lemma \ref{LEMMA:TRANSITION1}.
\end{proof}

\begin{lemma}\label{lemma: sum-UCB}[Lemma 11, \citealt{abbasi2011improved}]
Let $\{\xb_t\}_{t=1}^{\infty}$ be a sequence in $\RR^d$, $\Vb_0=\lambda \Ib$ and define $\Vb_t=\Vb_0+\sum_{i=1}^t \xb_i\xb_i^{\top}$. If $\|\xb_i\|_2\leq L$ holds for each $i$, then for each $t$, we have 
\begin{align}
    \sum_{i=1}^t\min \big\{1,\|\xb_i\|_{\Vb_{i-1}^{-1}}\big\}\leq 2d\log\bigg(\frac{d\lambda+tL^2}{d\lambda}\bigg) \notag.
\end{align}
\end{lemma}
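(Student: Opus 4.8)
The plan is to recognize this as the self-normalized (elliptical) potential lemma and to control the sum through the growth of the log-determinant $\log\det\Vb_t$. The argument rests on three deterministic ingredients: a one-step determinant identity, an elementary scalar inequality that turns each truncated summand into a log-determinant increment, and a trace bound on the terminal determinant. No probabilistic or martingale argument is needed, since the claim holds for an arbitrary sequence $\{\xb_i\}$.

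First I would record the one-step update of the determinant. Since $\Vb_i=\Vb_{i-1}+\xb_i\xb_i^{\top}$ with $\Vb_{i-1}$ positive definite, the matrix determinant lemma gives
\begin{align}
\det\Vb_i=\det\Vb_{i-1}\big(1+\xb_i^{\top}\Vb_{i-1}^{-1}\xb_i\big)=\det\Vb_{i-1}\big(1+\|\xb_i\|_{\Vb_{i-1}^{-1}}^2\big),\notag
\end{align}
so that the weighted norm of $\xb_i$ measured in the previous design matrix is exactly the relative increase of the potential. Telescoping this identity over $i=1,\dots,t$ yields $\sum_{i=1}^t\log\big(1+\|\xb_i\|_{\Vb_{i-1}^{-1}}^2\big)=\log\big(\det\Vb_t/\det\Vb_0\big)$, which is the quantity I would ultimately bound.

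Next I would pass from the truncated summand to these log terms by the elementary inequality $\min\{1,u\}\le 2\log(1+u)$, valid for every $u\ge 0$: it is immediate for $u\ge 1$, and for $u\in[0,1]$ follows from $2\log(1+u)-u\ge 0$. Applying it to the per-step potential increment, i.e.\ to the weighted norm $\|\xb_i\|_{\Vb_{i-1}^{-1}}$ entering the determinant identity, bounds each term by $2\log\big(1+\|\xb_i\|_{\Vb_{i-1}^{-1}}^2\big)$; summing and telescoping then controls the whole sum by $2\log\big(\det\Vb_t/\det\Vb_0\big)$. This per-step comparison, respecting the truncation by $\min\{1,\cdot\}$ even when a few $\|\xb_i\|_{\Vb_{i-1}^{-1}}$ exceed one, is precisely the content of Lemma~11 of \citet{abbasi2011improved} that is being invoked here.

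Finally I would bound the terminal determinant by a trace argument: by the AM--GM inequality applied to the eigenvalues, $\det\Vb_t\le\big(\mathrm{tr}(\Vb_t)/d\big)^d$, and since $\mathrm{tr}(\Vb_t)=d\lambda+\sum_{i=1}^t\|\xb_i\|_2^2\le d\lambda+tL^2$ while $\det\Vb_0=\lambda^d$, I obtain
\begin{align}
2\log\big(\det\Vb_t/\det\Vb_0\big)\le 2d\log\bigg(\frac{d\lambda+tL^2}{d\lambda}\bigg),\notag
\end{align}
which is the asserted bound. The only genuinely delicate step is the scalar inequality together with the truncation bookkeeping; the determinant identity and the trace bound are standard. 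The main obstacle is simply to handle the truncation cleanly, so that the passage through $2\log(1+\cdot)$ loses nothing and the final estimate reduces to the multiplicative growth of $\det\Vb_t$ captured by the trace.
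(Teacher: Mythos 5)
Your three ingredients --- the rank-one determinant identity, the scalar inequality $\min\{1,u\}\le 2\log(1+u)$, and the trace/AM--GM bound $\det \Vb_t \le (\mathrm{tr}(\Vb_t)/d)^d$ --- constitute the canonical proof of the elliptical potential lemma, and in fact they are the proof of Lemma 11 of \citet{abbasi2011improved}, which the paper merely cites without reproving. But look at what this chain actually bounds: applying $\min\{1,u\}\le 2\log(1+u)$ with $u=\|\xb_i\|_{\Vb_{i-1}^{-1}}^2$ controls $\sum_i\min\bigl\{1,\|\xb_i\|_{\Vb_{i-1}^{-1}}^2\bigr\}$, i.e.\ the \emph{squared} weighted norms. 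The statement you were asked to prove has the norm unsquared, and there your per-step comparison fails exactly in the regime you wave through as harmless: for small $u$ one has $\min\{1,u\}=u$ while $2\log(1+u^2)\approx 2u^2\ll u$, so the assertion that the passage through $2\log(1+\cdot)$ ``loses nothing'' is false for the displayed summand.

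Moreover, no bookkeeping can repair this, because the statement as printed is false. Take $d=1$, $\lambda=1$, $\xb_i=\epsilon=L$ for all $i$, and $t=\epsilon^{-2}$. Then $\Vb_{i-1}=1+(i-1)\epsilon^{2}$ and, comparing the decreasing summand with the corresponding integral,
\begin{align}
\sum_{i=1}^{t}\min\bigl\{1,\|\xb_i\|_{\Vb_{i-1}^{-1}}\bigr\}
=\sum_{i=1}^{t}\frac{\epsilon}{\sqrt{1+(i-1)\epsilon^{2}}}
\ \ge\ \frac{2\bigl(\sqrt{1+t\epsilon^{2}}-1\bigr)}{\epsilon}
=\frac{2(\sqrt{2}-1)}{\epsilon},\notag
\end{align}
which diverges as $\epsilon\to 0$, while the right-hand side of the lemma equals $2\log 2$. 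The square on the weighted norm is simply missing from the paper's transcription of Abbasi-Yadkori et al.'s Lemma 11 (their summand is $\min\{1,\|X_t\|^{2}\}$ in the relevant norm), and the same omission propagates to where the lemma is invoked, e.g.\ the Cauchy--Schwarz step in the proof of Lemma \ref{LEMMA:TRANSITION2}, which is only valid with the squared version. So the verdict is: your argument is a correct and complete proof of the corrected (squared) statement, coinciding with the cited source's own proof, but it does not --- and cannot --- prove the statement as literally written; you should have flagged the discrepancy rather than glossed over it.
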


\begin{proof} [Proof of Theorem \ref{THM:1}]
For the regret, we have
\begin{align}
      \text{Regret}(K)&=\sup_{\pi}\sum_{k=1}^K\big(\vvalue_{k,1}^{\pi}(s_1^k)-\vvalue_{k,1}^{\pi^k}(s_1^k)\big)\notag\\
      &=\sum_{k=1}^K\big(\vvalue_{k,1}^{*}(s_1^k)-\vvalue_{k,1}^{\pi^k}(s_1^k)\big)\notag\\
      &=\underbrace{\sum_{k=1}^K\big(\vvalue_{k,1}^{*}(s_1^k)-\vvalue_{k,1}(s_1^k)\big)}_{I_1}+\underbrace{\sum_{k=1}^K\big(\vvalue_{k,1}(s_1^k)-\vvalue_{k,1}^{\pi^k}(s_1^k)\big)}_{I_2}\notag.
\end{align}

For the term $I_1$, applying Lemma \ref{LEMMA:TELESCOPE-SUM}, we have
\begin{align}
    I_1&=\sum_{k=1}^K\big(\vvalue_{k,1}^{*}(s_1^k)-\vvalue_{k,1}(s_1^k)\big)\notag\\
    &\leq \sum_{k=1}^K\EE\bigg[\sum_{h=1}^H\Big\{\EE_{a\sim \pi^*_{h}(\cdot|s_h)}\big[\qvalue_{k,h}(s_h, a)\big]-\EE_{a\sim \pi^k_{h}(\cdot|s_h)}\big[\qvalue_{k,h}(s_h, a)\big]\Big\}\big|s_1=s_1^k\bigg]\notag\\
    &\leq \sum_{k=1}^K\EE\bigg[\sum_{h=1}^H\Big\{\frac{\alpha H^2}{2}+\alpha^{-1}\Big(D_{KL}\big(\pi_h^*(\cdot|s_h)\|\pi_h^k(\cdot|s_h)\big)-D_{KL}\big(\pi_h^*(\cdot|s_h)\|\pi_h^{k+1}(\cdot|s_h)\big)\Big)\Big\}\Big|s_1=s_1^k\bigg]\notag\\
    &=\frac{\alpha KH^3}{2}+\sum_{k=1}^K\alpha^{-1}\EE\bigg[\sum_{h=1}^H\Big\{D_{KL}\big(\pi_h^*(\cdot|s_h)\|\pi_h^k(\cdot|s_h)\big)-D_{KL}\big(\pi_h^*(\cdot|s_h)\|\pi_h^{k+1}(\cdot|s_h)\big)\Big\}\big|s_1=s_1^k\bigg],\label{eq:11}
\end{align}
where $a_{h} \sim \pi^*_{h}(\cdot|s_{h}),s_{h+1} \sim \PP_h(\cdot| s_{h}, a_{h})$, the first inequality holds due to Lemma \ref{LEMMA:TELESCOPE-SUM} and the second inequality holds due to Lemma \ref{LEMMA:ONE-STEP-DESCENT}. For Kullback–Leibler divergence $D_{KL}\big(\pi_h^*(\cdot|s_h)\|\pi_h^1(\cdot|s_h)\big)$, we have $0\leq D_{KL}\big(\pi_h^*(\cdot|s_h)\|\pi_h^1(\cdot|s_h)\big)$ and
\begin{align}
    D_{KL}\big(\pi_h^*(\cdot|s_h)\|\pi_h^1(\cdot|s_h)\big)&=\sum_{a\in\cA}\pi_h^*(a|s_h)\log\bigg(\frac{\pi_h^*(a|s_h)}{\pi_h^1(a|s_h)}\bigg)\notag\\
    &=\sum_{a\in\cA}\pi_h^*(a|s_h)\log \big(\pi_h^*(a|s_h)\times |\cA|\big)\notag\\
    &=\log|\cA|+\sum_{a\in\cA}\pi_h^*(a|s_h)\log \big(\pi_h^*(a|s_h)\big)\notag\\
    &\leq \log|\cA|,\label{eq:12}
\end{align}
where the first equation holds due to $\pi_h^1(a|s_h)=1/|\cA|$ and the  inequality holds on due to $0\leq \pi_h^*(a|s_h)\leq 1$.
Substituting \eqref{eq:12} into \eqref{eq:11}, we have
\begin{align}
    I_1&\leq\frac{\alpha KH^3}{2}+\sum_{k=1}^K\alpha^{-1}\EE\bigg[\sum_{h=1}^H\Big\{D_{KL}\big(\pi_h^*(\cdot|s_h)\|\pi_h^k(\cdot|s_h)\big)-D_{KL}\big(\pi_h^*(\cdot|s_h)\|\pi_h^{k+1}(\cdot|s_h)\big)\Big\}\bigg]\notag\\
    &=\frac{\alpha KH^3}{2}+\alpha^{-1}\EE\bigg[\sum_{h=1}^H\Big\{D_{KL}\big(\pi_h^*(\cdot|s_h)\|\pi_h^1(\cdot|s_h)\big)-D_{KL}\big(\pi_h^*(\cdot|s_h)\|\pi_h^{K+1}(\cdot|s_h)\big)\Big\}\bigg]\notag\\
    &\leq\frac{\alpha KH^3}{2}+\alpha^{-1}\EE\bigg[\sum_{h=1}^H\Big\{D_{KL}\big(\pi_h^*(\cdot|s_h)\|\pi_h^1(\cdot|s_h)\big)\Big\}\bigg]\notag\\
    &\leq \frac{\alpha KH^3}{2}+\alpha^{-1} H\log |\cA|,\label{eq:13}
\end{align}
where $s_1$ is the fixed initial state, $a_{h} \sim \pi^*_{h}(\cdot|s_{h}),s_{h+1} \sim \PP_h(\cdot| s_{h}, a_{h})$, the first inequality holds due to \eqref{eq:11} and the second inequality holds due to Kullback–Leibler divergence is non-negative and the third inequality holds due to \eqref{eq:12}.
For the term $I_2$, we have
\begin{align}
 I_2&=\sum_{k=1}^K\big(\vvalue_{k,1}(s_1^k)-\vvalue_{k,1}^{\pi^k}(s_1^k)\big)\notag\\
 &\leq 2\hat{\beta}_K\sqrt{\sum_{k=1}^K\sum_{h=1}^H\bar\sigma_{k,h}^2}\sqrt{2Hd\log(1+K/\lambda)}+4H\sqrt{T\log (H/\delta)}\notag\\
 &\leq 56\sqrt{dH^3T}{\log(4K^2H/\delta)}\log(1+K/\lambda) +492\sqrt{d^2H^2T}{\log(4K^2H/\delta)}\log(1+K/\lambda)\notag\\
 &\qquad+1670 d^2H^3{\log^2(4K^2H/\delta)}\log^2(1+K/\lambda)+473d^{2.5}H^{2.5}{\log^2(4K^2H/\delta)}\log^2(1+KH^4/\lambda),\label{eq:14}
\end{align}
where the first inequality holds due to Lemma \ref{LEMMA:TRANSITION2} and the second inequality holds due to Lemma \ref{LEMMA: TOTAL-ESTIMATE-VARIANCE} with the fact that $\sqrt{a+b+c+d}\leq \sqrt{a}+\sqrt{b}+\sqrt{c}+\sqrt{d}$. Substituting \eqref{eq:13} and \eqref{eq:14} into \eqref{eq:12}, we finish the proof of Theorem \ref{THM:1}.
\end{proof}

\subsection{Proof of Theorem \ref{THM:2}}
In this section, we provide the proof of the lower bounds of the regret and the lower bound is based on previous work \citep{zhou2020provably,zhou2020nearly}.
\begin{proof}[Proof of Theorem \ref{THM:2}]
To prove the lower bound, we construct a series of hard-to-learn adversarial MDPs introduced by \citet{zhou2020provably,zhou2020nearly}. To be more specific, the state space $\cS$ consists of states $s_1,..,s_{H+2}$, where $s_{H+1}$ and $s_{H+2}$ are absorbing states. The action space $\cA=\{-1,1\}^{d-1}$ consists of $2^{d-1}$ different actions. The adversarial reward function $\reward_h^k$ satisfies that $\reward_h^k(s_h,\ba)=0(1\leq h\leq H+1)$ and $\reward_h^k(s_{H+2},\ba)=1$. For the transition probability function $\PP_h$, $s_{H+1}$ and $s_{H+2}$ are absorbing states, which will always stay at the same state, and for other state $s_h(1\leq h\leq H)$, we have
\begin{align}
    &\PP_h(s_{h+1}|s_h,\ab)=1-\delta-\la\bmu_h,\ab\ra,\notag\\
    &\PP_h(s_{H+2}|s_h,\ab)=\delta+\la\bmu_h,\ab\ra\notag,
\end{align}
where each $\bmu_h\in \{-\Delta,\Delta\}^{d}$ with $\Delta=\sqrt{\delta/K}/(4\sqrt{2})$ and $\delta=1/H$. Furthermore, these hard-to-learn adversarial MDPs can be represented as linear mixture MDPs with the following feature mapping $\bphi: \cS \times \cS  \times \cA \rightarrow \RR^d$ and vector $\btheta_h$:
\begin{align}
    &\bphi(s_{h+1}|s_h,\ab) = \big(\alpha(1-\delta),-\beta\ab\big), h\in[H],\notag\\
    &\bphi(s_{H+2}|s_h,\ab) = \big(\alpha\delta,\beta\ab\big), h\in[H],\notag\\
    &\bphi(s_{h+1}|s_h,\ab) = \big(\alpha,\zero\big), h\in[H],\notag\\
    &\bphi(s_{h+1}|s_h,\ab) = (0,\zero), h\in[H],\notag\\
    &\btheta_h=(1/\alpha,\bmu_h/\beta),h\in[H]\notag,
\end{align}
where $\zero=0^{d-1}$ is a $(d-1)$-dimensional vector of all zeros, $\alpha=\sqrt{1/\big(1+(d-1)\Delta\big)}$ and $\beta=\sqrt{\Delta/\big(1+(d-1)\Delta\big)}$. Since $B\ge 2$ and $K\ge (d-1)^2H/2$, we have $\|\btheta_h\|_2=\big(1+\Delta(d-1)\big)^2\leq B$ and these hard-to-learn MDPs are $B$-bounded linear mixture MDPs.

Since the adversarial reward function is fixed across different episode $k$, the value function $\vvalue^{\pi}_{k,h}$ is also fixed for each policy $\pi$ and the optimal policy $\pi^*$ is pick the action $\ba^*=\bmu_{h'}/\Delta$ at state $s_{h'}(1\leq h'\leq H)$. Therefore, the adversarial MDP will degenerate to non-adversarial MDP and the adversarial regret is the same as the non-adversarial regret.
For the lower bound of the non-adversarial regret, Theorem 5.6 \citep{zhou2020nearly} shows that for any algorithm, if $B\ge 2, H\ge 3,d\ge 4$ and $K\ge (d-1)^2H/2$, then there exist a parameter $\bmu^*=\{\bmu_1^*,..,\bmu_H^*\}$ 
such that the expected regret is lower bounded by
\begin{align}
    \text{Regret(K)}= \Omega({dH\sqrt{T}}).\notag
\end{align}
Therefore, we finish the proof of Theorem \ref{THM:2}.
\end{proof}

\section{PROOF OF LEMMAS IN SECTIONS \ref{SECTION: 4} AND \ref{SECTION: 6}}
\subsection{Proof of Lemma \ref{LEMMA: CONCENTRATE}}
We need the following Lemmas: 
\begin{lemma}[Bernstein inequality for vector-valued martingales, \citealt{zhou2020nearly}]
Let $\{\mathcal{F}_t\}_{t=1}^{\infty}$ be a filtration and $(\xb_t,\eta_t)_{t\ge 1}$ be a  stochastic process so that  $\xb_t\in \RR^d$ is $\mathcal{F}_t$-measurable and $\eta_{t}\in \RR$ is  $\mathcal{F}_{t+1}$-measurable. For constant $R,L,\sigma,\lambda>0,\bmu^*\in \RR^d$, let $y_t=\la\xb_t,\bmu^*\ra+\eta_t$ and suppose that
\begin{align}
    |\eta_t|\leq R,\EE[\eta_t|\mathcal{F}_t]=0,\EE[\eta_t|\mathcal{F}_t]\leq \sigma^2,\|\xb_t\|_2\leq L.\notag
\end{align}
Then, for any $0\leq \delta\leq 1$, with probability at least $1-\delta$, we have
\begin{align}
    \forall t>0, \bigg\|\sum_{i=1}^t\xb_i\eta_i\bigg\|_{\bSigma_{t}^{-1}}\leq \beta_t,\|\bmu_t-\bmu^*\|_{\bSigma_{t}}\leq \beta_t+\sqrt{\lambda}\|\bmu^*\|_2,\notag
\end{align}
where $\bSigma_t=\lambda \Ib+\sum_{i=1}^t \xb\xb^{\top},\bbb_t=\sum_{i=1}^t\xb_iy_i,\bmu_t=\bSigma_t^{-1}\bbb_t$ and
\begin{align}
    \beta_t=8\sigma\sqrt{d\log\big(1+tL^2/(d\lambda)\big)\log(4t^2/\delta)}+4R\log(4t^2/\delta).\notag
\end{align}
\end{lemma}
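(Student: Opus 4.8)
The parameter-error bound reduces to the self-normalized bound by pure linear algebra, so the real content is the first inequality. Using $\sum_{i=1}^t\xb_i\xb_i^\top=\bSigma_t-\lambda\Ib$ one writes $\bmu_t-\bmu^*=\bSigma_t^{-1}\sum_{i=1}^t\xb_i\eta_i-\lambda\bSigma_t^{-1}\bmu^*$, and the triangle inequality in the $\bSigma_t$-norm together with $\bSigma_t^{-1}\preceq\lambda^{-1}\Ib$ gives $\|\bmu_t-\bmu^*\|_{\bSigma_t}\le\|\sum_{i=1}^t\xb_i\eta_i\|_{\bSigma_t^{-1}}+\sqrt{\lambda}\|\bmu^*\|_2$. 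Hence it suffices to bound $\|\sum_{i=1}^t\xb_i\eta_i\|_{\bSigma_t^{-1}}\le\beta_t$ uniformly in $t$. The subtlety is that $\beta_t$ must scale with the conditional-variance proxy $\sigma$ rather than the magnitude bound $R$ in its leading term; merely regarding $\eta_i$ as $R$-sub-Gaussian and invoking the self-normalized bound of \citet{abbasi2011improved} would put $R$ in place of $\sigma$ and is too weak. The conditional variance must therefore be exploited through a Bernstein-type moment-generating-function (MGF) estimate.

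The plan is to combine a scalar Bernstein MGF bound with the method of mixtures. First I would record the scalar estimate: since $\EE[\eta_i\mid\mathcal{F}_i]=0$, $|\eta_i|\le R$, and $\EE[\eta_i^2\mid\mathcal{F}_i]\le\sigma^2$, the elementary inequality $e^x\le 1+x+(e-2)x^2$ valid for $|x|\le 1$ yields $\EE[\exp(s\eta_i)\mid\mathcal{F}_i]\le\exp((e-2)s^2\sigma^2)$ whenever $|s|\le 1/R$. For a fixed direction $\btheta\in\RR^d$ I would then form $M_t(\btheta)=\exp\big(\la\btheta,\sum_{i=1}^t\xb_i\eta_i\ra-(e-2)\sigma^2\sum_{i=1}^t\la\btheta,\xb_i\ra^2\big)$ and verify, applying the scalar bound with $s=\la\btheta,\xb_i\ra$ (which is legitimate as long as $\|\btheta\|_2L\le 1/R$, so that $|s|\le 1/R$), that $(M_t(\btheta))_t$ is a nonnegative supermartingale with $M_0(\btheta)=1$.

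Next I would pass from a single direction to the full norm by integrating $M_t(\btheta)$ against a Gaussian mixing density $h$. The mixed process $\overline{M}_t=\int M_t(\btheta)\,dh(\btheta)$ is again a nonnegative supermartingale, and the Gaussian integral evaluates in closed form to a quantity proportional to $\det(\bSigma_t/\lambda)^{-1/2}\exp\big(\tfrac{1}{4(e-2)\sigma^2}\|\sum_{i=1}^t\xb_i\eta_i\|_{\bSigma_t^{-1}}^2\big)$. Ville's maximal inequality applied to $\overline{M}_t$ then gives, with probability at least $1-\delta$ and simultaneously for all $t$, a bound of the shape $\|\sum_{i=1}^t\xb_i\eta_i\|_{\bSigma_t^{-1}}^2\le C\sigma^2\big(\log\det(\bSigma_t/\lambda)+\log(1/\delta)\big)$ for an absolute constant $C$; bounding $\log\det(\bSigma_t/\lambda)\le d\log(1+tL^2/(d\lambda))$ via the determinant-trace (AM-GM) inequality produces the leading term $8\sigma\sqrt{d\log(1+tL^2/(d\lambda))\log(4t^2/\delta)}$.

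The hard part will be the clash between the bounded validity range $|s|\le 1/R$ of the Bernstein MGF and the unbounded support of the Gaussian prior: the supermartingale property of $M_t(\btheta)$ is only justified for $\|\btheta\|_2\le 1/(LR)$. I would resolve this by truncating the mixing measure to that ball (equivalently, a two-regime peeling argument in $\|\btheta\|_2$) and carefully controlling the truncation error; it is exactly this step that contributes the additive lower-order term $4R\log(4t^2/\delta)$ in $\beta_t$, so pinning down its constant is the delicate bookkeeping. The uniformity over $t$ is otherwise automatic from the maximal inequality, and the $\log(4t^2/\delta)$ factor reflects the mild price paid for making the statement hold simultaneously across the time index.
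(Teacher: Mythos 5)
First, note that the paper itself offers no proof of this lemma: it is imported verbatim from \citet{zhou2020nearly}, so there is no in-paper argument to compare against. Measured against the actual proof in that reference, your route is genuinely different. \citet{zhou2020nearly} do not use the method of mixtures at all: they set $\bbb_t=\sum_{i\le t}\xb_i\eta_i$, use the rank-one update of $\bSigma_t$ (Sherman--Morrison) to derive the recursion $\|\bbb_t\|_{\bSigma_t^{-1}}^2\le\|\bbb_{t-1}\|_{\bSigma_{t-1}^{-1}}^2+2\eta_t\xb_t^\top\bSigma_{t-1}^{-1}\bbb_{t-1}+\eta_t^2\|\xb_t\|_{\bSigma_{t-1}^{-1}}^2$, telescope, apply scalar Freedman's inequality to the cross-term martingale (whose increments and conditional variances are controlled by $\max_{i<t}\|\bbb_i\|_{\bSigma_i^{-1}}$ times $\|\xb_i\|_{\bSigma_{i-1}^{-1}}$), invoke the elliptical potential lemma to bound $\sum_i\min\{1,\|\xb_i\|^2_{\bSigma_{i-1}^{-1}}\}$, and finally solve the resulting self-bounding quadratic inequality for $\max_t\|\bbb_t\|_{\bSigma_t^{-1}}$. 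That argument is precisely engineered to sidestep the obstruction you run into. Your reduction of the second inequality to the first, and your scalar Bernstein MGF estimate, are both correct.

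The genuine gap is the truncation step, which you flag as ``delicate bookkeeping'' but do not resolve, and which I do not believe can be resolved in the form you describe. Restricting the Gaussian mixing measure to the ball $\{\|\btheta\|_2\le 1/(LR)\}$ does preserve the supermartingale property of $\overline{M}_t$, but it destroys the closed-form evaluation of the integral, and --- more fatally --- the lower bound you need on $\int_{\text{ball}}M_t(\btheta)\,dh(\btheta)$ in terms of $\exp\big(c\,\|\bbb_t\|_{\bSigma_t^{-1}}^2/\sigma^2\big)$ fails exactly in the regime of interest: the integrand is peaked near $\btheta^\star\propto\bSigma_t^{-1}\bbb_t$, and when $\|\btheta^\star\|_2>1/(LR)$ the peak lies outside the ball, so the truncated integral can be exponentially smaller than the untruncated one. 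The deviation event you are trying to rule out is therefore not captured by the truncated mixture, and the claim that the truncation error merely ``contributes the additive term $4R\log(4t^2/\delta)$'' is asserted, not derived; making it rigorous requires an additional stitching/peeling over the magnitude of $\|\bbb_t\|_{\bSigma_t^{-1}}$ (or the recursive Freedman argument above), which is the entire technical content of the lemma. A secondary inaccuracy: uniformity in $t$ is not ``automatic from the maximal inequality'' here --- a single Ville bound would yield a $t$-independent $\log(1/\delta)$, whereas the stated $\log(4t^2/\delta)$ reflects a union bound over $t$ with failure probability $\delta/(4t^2)$ allocated to each time index.
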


\begin{proof}[Proof of Lemma \ref{LEMMA: CONCENTRATE}]
For each $h\in[H]$, by the definition of $[\bar\VV_h\vvalue_{k,h+1}](s_h^k,a_h^k)$ in \eqref{eq:estimated-variance} and $[\VV_h\vvalue_{k,h+1}](s_h^k,a_h^k)$ in \eqref{eq:variance}, we have
\begin{align}
    &[\bar\VV_h\vvalue_{k,h+1}](s_h^k,a_h^k)-[\VV_h\vvalue_{k,h+1}](s_h^k,a_h^k)\notag\\
    &=\Big[\big\la\bphi_{\vvalue^2_{k,h+1}}(s_h^k,a_h^k),\tilde{\btheta}_{k,h}\big\ra,H^2\Big]_{[0,H^2]}-\Big[\big\la\bphi_{\vvalue_{k,h+1}}(s_h^k,a_h^k),\hat{\btheta}_{k,h}\big\ra_{[0,H]}\Big]^2\notag\\
    &\qquad -\Big\{[\PP_h \vvalue^2_{k,h+1}](s_h^k,a_h^k)-\big([\PP_h \vvalue_{k,h+1}](s_h^k,a_h^k)\big)^2 \Big\}\notag\\
    &=\underbrace{\Big[\big\la\bphi_{\vvalue^2_{k,h+1}}(s_h^k,a_h^k),\tilde{\btheta}_{k,h}\big\ra\Big]_{[0,H^2]}-[\PP_h \vvalue^2_{k,h+1}](s_h^k,a_h^k)}_{I_1}\notag\\
    &\qquad+\underbrace{\big([\PP_h \vvalue_{k,h+1}](s_h^k,a_h^k)\big)^2-\Big[\big\la\bphi_{\vvalue_{k,h+1}}(s_h^k,a_h^k),\hat{\btheta}_{k,h}\big\ra_{[0,H]}\Big]^2}_{I_2}.\label{eq:sum-bonus}
\end{align}
For the term $I_1$, we have
\begin{align}
    |I_1|&=\bigg|\Big[\big\la\bphi_{\vvalue^2_{k,h+1}}(s_h^k,a_h^k),\tilde{\btheta}_{k,h}\big\ra\Big]_{[0,H^2]}-[\PP_h \vvalue^2_{k,h+1}](s_h^k,a_h^k)\bigg|\notag\\
    &\leq \Big|\big\la\bphi_{\vvalue^2_{k,h+1}}(s_h^k,a_h^k),\tilde{\btheta}_{k,h}\big\ra-[\PP_h \vvalue^2_{k,h+1}](s_h^k,a_h^k)\Big|\notag\\
    &=\Big|\big\la\bphi_{\vvalue^2_{k,h+1}}(s_h^k,a_h^k),\tilde{\btheta}_{k,h}\big\ra-\big\la\bphi_{\vvalue^2_{k,h+1}}(s_h^k,a_h^k),{\btheta_h}\big\ra\Big|\notag\\
    &= \Big|\big\la\bphi_{\vvalue^2_{k,h+1}}(s_h^k,a_h^k),\tilde{\btheta}_{k,h}-{\btheta_h}\big\ra\Big|\notag\\
    &\leq \big\|\bphi_{\vvalue^2_{k,h+1}}(s_h^k,a_h^k)\big\|_{\tilde{\bSigma}_{k,h}^{-1}}\big\|\tilde{\btheta}_{k,h}-{\btheta_h}\big\|_{\tilde{\bSigma}_{k,h}},\label{eq:I_1}
\end{align}
where the first inequality holds due to $0\leq [\PP_h \vvalue^2_{k,h+1}](s_h^k,a_h^k)\leq H^2 $ and the second inequality holds due to Cauchy-Schwarz inequality. For the term $\|\tilde{\btheta}_{k,h}-{\btheta_h}\big\|_{\tilde{\bSigma}_{k,h}}$, we apply Lemma \ref{LEMMA: CONCENTRATE} with $\xb_t=\bphi_{\vvalue^2_{t,h+1}}(s_h^t,a_h^t), \eta_t=\vvalue^2_{t,h+1}(s_{h+1}^t)-[\PP_h\vvalue^2_{t,h+1}](s_{h}^t,a_{h}^t).$ For $\xb_t,\eta_t$, we have the following property
\begin{align}
    \|\xb_t\|_2&=\big\|\bphi_{\vvalue^2_{t,h+1}}(s_h^t,a_h^t)\big\|_2\leq \max_{s'}\vvalue_{t,h+1}^2(s')\leq{H^2},\notag\\
    \EE[\eta_t|\mathcal{F}_t]&=0,|\eta_t|=\big|\vvalue^2_{t,h+1}(s_{h+1}^t)-[\PP_h\vvalue^2_{t,h+1}](s_{h}^t,a_{h}^t)\big|\leq H^2,\notag\\
    \EE[\eta^2_t|\mathcal{F}_t]&\leq H^4.\notag
\end{align}
Therefore, with probability at least $1-\delta/H$, for all $k\in[K]$, we have
\begin{align}
    \big\|\tilde{\btheta}_{k,h}-{\btheta_h}\big\|_{\tilde{\bSigma}_{k,h}}\leq 8H^2\sqrt{d\log\big(1+kH^4/(d\lambda)\big)\log(4k^2H/\delta)}+4H^2\log(4k^2H/\delta)+\sqrt{\lambda}B.\label{eq:square-beta}
\end{align}
Substituting \eqref{eq:square-beta} into \eqref{eq:I_1}, we have
\begin{align}
    |I_1|&\leq  \big\|\bphi_{\vvalue^2_{k,h+1}}(s_h^k,a_h^k)\big\|_{\tilde{\bSigma}_{k,h}^{-1}}\Big(8H^2\sqrt{d\log\big(1+kH^4/(d\lambda)\big)\log(4k^2H/\delta)}+4H^2\log(4k^2H/\delta)+\sqrt{\lambda}B\Big)\notag\\
    &=\tilde{\beta}_k\big\|\tilde{\bSigma}_{k,h}^{-1/2}\bphi_{\vvalue^2_{k,h+1}}(s_h^k,a_h^k)\big\|_2\notag.
\end{align}
Since both two terms of $I_1$ belong to the interval $[0,H^2],$ we have 
\begin{align}
    |I_1|&\leq \min\Big\{\tilde{\beta}_k\big\|\tilde{\bSigma}_{k,h}^{-1/2}\bphi_{\vvalue^2_{k,h+1}}(s_h^k,a_h^k)\big\|_2,H^2\Big\}.\label{eq:I_1-end}
\end{align}
For the term $I_2$, we have
\begin{align}
    |I_2|&=\bigg|\big([\PP_h \vvalue_{k,h+1}](s_h^k,a_h^k)\big)^2-\Big[\big\la\bphi_{\vvalue_{k,h+1}}(s_h^k,a_h^k),\hat{\btheta}_{k,h}\big\ra_{[0,H]}\Big]^2\bigg|\notag\\
    &=\bigg|\Big[\big\la\bphi_{\vvalue_{k,h+1}}(s_h^k,a_h^k),\hat{\btheta}_{k,h}\big\ra\Big]_{[0,H]}-[\PP_h \vvalue_{k,h+1}](s_h^k,a_h^k)\bigg|\notag\\
    &\qquad\times \bigg|\Big[\big\la\bphi_{\vvalue_{k,h+1}}(s_h^k,a_h^k),\hat{\btheta}_{k,h}\big\ra\Big]_{[0,H]}+[\PP_h \vvalue_{k,h+1}](s_h^k,a_h^k)\bigg|\notag\\
    &\leq 2H\bigg|\Big[\big\la\bphi_{\vvalue_{k,h+1}}(s_h^k,a_h^k),\hat{\btheta}_{k,h}\big\ra\Big]_{[0,H]}-[\PP_h \vvalue_{k,h+1}](s_h^k,a_h^k)\bigg|\notag\\
    &\leq2H\Big|\big\la\bphi_{\vvalue_{k,h+1}}(s_h^k,a_h^k),\hat{\btheta}_{k,h}\big\ra-\big\la\bphi_{\vvalue_{k,h+1}}(s_h^k,a_h^k),{\btheta_h}\big\ra\Big|\notag\\
    &=2H \Big|\big\la\bphi_{\vvalue_{k,h+1}}(s_h^k,a_h^k),\hat{\btheta}_{k,h}-{\btheta_h}\big\ra\Big|\notag\\
    &\leq 2H\big\|\bphi_{\vvalue_{k,h+1}}(s_h^k,a_h^k)\big\|_{\hat{\bSigma}_{k,h}^{-1}}\big\|\hat{\btheta}_{k,h}-{\btheta_h}\big\|_{\hat{\bSigma}_{k,h}},\label{eq:I_2}
\end{align}
where the first inequality and second inequality holds due to $0\leq [\PP_h \vvalue_{k,h+1}](s_h^k,a_h^k)\leq H $ and the third inequality holds due to Cauchy-Schwarz inequality. For the term $\|\hat{\btheta}_{k,h}-{\btheta_h}\big\|_{\hat{\bSigma}_{k,h}}$, we apply Lemma \ref{LEMMA: CONCENTRATE} with $\xb_t=\bar\sigma_{k,h}^{-1}\bphi_{\vvalue_{t,h+1}}(s_h^t,a_h^t), \eta_t=\bar\sigma_{k,h}^{-1}\vvalue_{t,h+1}(s_{h+1}^t)-\bar\sigma_{k,h}^{-1}[\PP_h\vvalue_{t,h+1}](s_{h}^t,a_{h}^t).$ For $\xb_t,\eta_t$, we have following property
\begin{align}
    \|\xb_t\|_2&=\big\|\bar\sigma_{k,h}^{-1}\bphi_{\vvalue_{t,h+1}}(s_h^t,a_h^t)\big\|_2\leq \bar\sigma_{k,h}^{-1}\max_{s'}|\vvalue_{t,h+1}(s')|\leq \sqrt{d},\notag\\
    \EE[\eta_t|\mathcal{F}_t]&=0,|\eta_t|=\big|\bar\sigma_{k,h}^{-1}\vvalue_{t,h+1}(s_{h+1}^t)-\bar\sigma_{k,h}^{-1}[\PP_h\vvalue_{t,h+1}](s_{h}^t,a_{h}^t)\big|\leq \sqrt{d},\notag\\
    \EE[\eta^2_t|\mathcal{F}_t]&\leq \sup \eta_t^2\leq d.\notag
\end{align}
Therefore, with probability at least $1-\delta/H$, for all $k\in[K]$, we have
\begin{align}
    \big\|\hat{\btheta}_{k,h}-{\btheta_h}\big\|_{\hat{\bSigma}_{k,h}}\leq 8d\sqrt{\log\big(1+kH^4/(d\lambda)\big)\log(4k^2H/\delta)}+4\sqrt{d}\log(4k^2H/\delta)+\sqrt{\lambda}B.\label{eq:square-beta1}
\end{align}
Substituting \eqref{eq:square-beta1} into \eqref{eq:I_2}, we have
\begin{align}
    |I_2|&\leq  2H\big\|\bphi_{\vvalue^2_{k,h+1}}(s_h^k,a_h^k)\big\|_{\hat{\bSigma}_{k,h}^{-1}}\Big(8d\sqrt{\log\big(1+kH^4/(d\lambda)\big)\log(4k^2H/\delta)}+4\sqrt{d}\log(4k^2H/\delta)+\sqrt{\lambda}B\Big)\notag\\
    &=\hat{\beta}_k\big\|\hat{\bSigma}_{k,h}^{-1/2}\bphi_{\vvalue^2_{k,h+1}}(s_h^k,a_h^k)\big\|_2\notag.
\end{align}
Since both two terms of $I_2$ belong to the interval $[0,H^2],$ we have 
\begin{align}
    |I_2|&\leq \min\Big\{2H\bar{\beta}_k\big\|\hat{\bSigma}_{k,h}^{-1/2}\bphi_{\vvalue^2_{k,h+1}}(s_h^k,a_h^k)\big\|_2,H^2\Big\}.\label{eq:I_2-end}
\end{align}
Substituting \eqref{eq:I_1-end} and \eqref{eq:I_2-end} into \eqref{eq:sum-bonus}, with probability at least $1-2\delta/H$, we have
\begin{align}
    \big|[\bar\VV_h\vvalue_{k,h+1}](s_h^k,a_h^k)-[\VV_h\vvalue_{k,h+1}](s_h^k,a_h^k)\big|=|I_1+I_2|\leq |I_1|+|I_2|\leq E_{k,h},\label{eq:estimate-variance-UCB}
\end{align}
where 
\begin{align}
    E_{k,h}=\min \Big\{\tilde{\beta}_k\big\|\tilde{\bSigma}_{k,h}^{-1/2}\bphi_{\vvalue^2_{k,h+1}}(s_h^k,a_h^k)\big\|_2,H^2\Big\}+\min \Big\{2H\bar{\beta}_k\big\|\hat{\bSigma}_{k,h}^{-1/2}\bphi_{\vvalue_{k,h+1}}(s_h^k,a_h^k)\big\|_2,H^2\Big\}\notag.
\end{align}
We apply Lemma \ref{LEMMA: CONCENTRATE} again with $\xb_t=\bar\sigma_{k,h}^{-1}\bphi_{\vvalue_{t,h+1}}(s_h^t,a_h^t), \eta_t=\bar\sigma_{k,h}^{-1}\vvalue_{t,h+1}(s_{h+1}^t)-\bar\sigma_{k,h}^{-1}[\PP_h\vvalue_{t,h+1}](s_{h}^t,a_{h}^t).$ For $\xb_t,\eta_t$, we have following property
\begin{align}
    \|\xb_t\|_2&=\big\|\bar\sigma_{k,h}^{-1}\bphi_{\vvalue_{t,h+1}}(s_h^t,a_h^t)\big\|_2\leq \bar\sigma_{k,h}^{-1}\max_{s'}\vvalue_{t,h+1}(s')\leq \sqrt{d}\notag\\
    \EE[\eta_t|\mathcal{F}_t]&=0,|\eta_t|=\big|\bar\sigma_{k,h}^{-1}\vvalue_{t,h+1}(s_{h+1}^t)-\bar\sigma_{k,h}^{-1}[\PP_h\vvalue_{t,h+1}](s_{h}^t,a_{h}^t)\big|\leq \sqrt{d}.\notag
\end{align}
 With probability at least $1-2\delta/H$, for all $t\in[K]$, we have 
\begin{align}
    \EE[\eta^2_t|\mathcal{F}_t]&= \bar\sigma_{t,h}^{-1}[\VV_h\vvalue_{t,h+1}](s_h^t,a_h^t),\leq \bar\sigma_{t,h}^{-1}\big([\bar\VV_h\vvalue_{t,h+1}](s_h^t,a_h^t)+E_{t,h}\big)\leq 1,
\end{align}
where the first inequality holds due to \eqref{eq:estimate-variance-UCB} and the second inequality holds due to the definition of $\bar\sigma_{t,h}^{-1}$. Therefore, with probability at least $1-3\delta/H$, for all $k\in[K]$, we have
\begin{align}
    \big\|\hat{\btheta}_{k,h}-{\btheta_h}\big\|_{\hat{\bSigma}_{k,h}}\leq 8\sqrt{d\log\big(1+kH^4/(d\lambda)\big)\log(4k^2H/\delta)}+4\sqrt{d}\log(4k^2H/\delta)+\sqrt{\lambda}B=\hat{\beta}_k.\notag
\end{align}
Taking union bound for all $h\in[H]$, we finish the proof.
\end{proof}

\subsection{Proof of Lemma \ref{LEMMA:TELESCOPE-SUM}}

\begin{proof}[Proof of Lemma \ref{LEMMA:TELESCOPE-SUM}]
For each $h\in[H]$ and $s\in\cS$, we have
\begin{align}
    \vvalue_{k,h}^{*}(s)-\vvalue_{k,h}(s)
    &=\EE_{a\sim \pi_h^*(\cdot|s)}\big[\qvalue_{k,h}^*(s,a)\big]-\EE_{a\sim \pi_h^k(\cdot|s)}\big[\qvalue_{k,h}(s,a)\big]\notag\\
    &=\underbrace{\EE_{a\sim \pi_h^*(\cdot|s)}\big[\qvalue_{k,h}^*(s,a)\big]-\EE_{a\sim \pi_h^*(\cdot|s)}\big[\qvalue_{k,h}(s,a)\big]}_{I}\notag\\
    &\qquad+\EE_{a\sim \pi_h^*(\cdot|s)}\big[\qvalue_{k,h}(s,a)\big]-\EE_{a\sim \pi_h^k(\cdot|s)}\big[\qvalue_{k,h}(s,a)\big]\label{eq:1}.
\end{align}
For the term $I$, we have
\begin{align}
    I&=\EE_{a\sim \pi_h^*(\cdot|s)}\big[\qvalue_{k,h}^*(s,a)-\qvalue_{k,h}(s,a)\big]\notag\\
    &\leq \EE_{a\sim \pi_h^*(\cdot|s)}\Big[[\PP_h(\vvalue^*_{k,h+1}-\vvalue_{k,h+1})\big](s,a)\Big]\notag\\
    &=\EE_{a\sim \pi_h^*(\cdot|s),s'\sim \PP_h(\cdot|s,a)}\big[\vvalue^*_{k,h+1}(s')-\vvalue_{k,h+1}(s')\big],\label{eq:6}
\end{align}
where the inequality holds due to Lemma \ref{LEMMA:TRANSITION1}.
Recursively using \eqref{eq:6} with all $h\in[H]$, for all $k\in[K]$, we have
\begin{align}
\vvalue_{k,1}^{*}(s_1^k)-\vvalue_{k,1}(s_1^k)\leq \EE\bigg[\sum_{h=1}^H\Big\{\EE_{a\sim \pi^*_{h}(\cdot|s_h)}\big[\qvalue_{k,h}(s_h, a)\big]-\EE_{a\sim \pi^k_{h}(\cdot|s_h)}\big[\qvalue_{k,h}(s_h, a)\big]\Big\}\big|s_1=s_1^k\bigg],\notag
\end{align}
where $a_{h} \sim \pi^*_{h}(\cdot|s_{h}),s_{h+1} \sim \PP_h(\cdot| s_{h}, a_{h})$.
Therefore, we finish the proof.
\end{proof}
\subsection{Proof of Lemma \ref{LEMMA:ONE-STEP-DESCENT}}
\begin{proof}[Proof of Lemma \ref{LEMMA:ONE-STEP-DESCENT}]
By the update rule of the policy $\pi_h^k$, 
for all $k\in[K]$,$h\in[H]$, $s\in\cS$, we have
\begin{align}
    \exp\big(\alpha\qvalue_{k,h}(s,a)\big)&=\frac{\pi_h^{k}(a|s) \exp\big\{\alpha\qvalue_{h,k}(a|s)\big\}}{\pi_h^{k}(a|s)}=\frac{\rho\pi_h^{k+1}(a|s)}{\pi_h^{k}(a|s)},\label{eq:31}
\end{align}
where $\rho=\sum_{a\in \cA}\pi_h^{k}(a|s) \exp\big\{\alpha\qvalue_{h,k}(a|s)\big\}$ is fixed for all action $a$.
Thus, we have
\begin{align}
    &\sum_{a\in \cA} \alpha\qvalue_{k,h}(s,a)\big(\pi^*_{h}(a|s)-\pi^{k+1}_{h}(a|s)\big)\notag\\
    &=\sum_{a\in \cA} \big(\log \rho+\log \pi_h^{k+1}(a|s) -\log \pi_h^{k}(a|s) \big)\big(\pi^*_{h}(a|s)-\pi^{k+1}_{h}(a|s)\big)\notag\\
    &=\sum_{a\in \cA} \pi^*_{h}(a|s)\big(\log \pi_h^{*}(a|s) -\log \pi_h^{k}(a|s) \big)\notag\\
    &= \sum_{a\in \cA} \pi^*_{h}(a|s)\big(\log \pi_h^{*}(a|s) -\log \pi_h^{k+1}(a|s) \big) -\sum_{a\in \cA}\pi^*_{h}(a|s) \big(\log \pi_h^{k+1}(a|s) -\log \pi_h^{k}(a|s) \big)\notag\\
    &\qquad -\sum_{a\in \cA} \pi^{k+1}_{h}(a|s)\big(\log \pi_h^{k+1}(a|s) -\log \pi_h^{k}(a|s) \big)\notag\\
    &=D_{KL}\big(\pi_h^*(\cdot|s)\|\pi_h^{k+1}(\cdot|s)\big)-D_{KL}\big(\pi_h^*(\cdot|s)\|\pi_h^{k}(\cdot|s)\big)-D_{KL}\big(\pi_h^{k+1}(\cdot|s)\|\pi_h^{k}(\cdot|s)\big),\label{eq:32}
\end{align}
where the first equation holds due to \eqref{eq:31} and the second equation holds due to $\sum_{a\in \cA} \big(\pi^*_{h}(a|s)-\pi^{k+1}_{h}(a|s)\big) =0$.
Therefore, we have
\begin{align}
    &\EE_{a\sim \pi^*_{h}(\cdot|s_h)}\big[\qvalue_{k,h}(s, a)\big]-\EE_{a\sim \pi^k_{h}(\cdot|s)}\big[\qvalue_{k,h}(s, a)\big]\notag\\
    &=\sum_{a\in \cA} \qvalue_{k,h}(s,a)\big(\pi^*_{h}(a|s)-\pi^k_{h}(a|s)\big)\notag\\
    &=\sum_{a\in \cA} \qvalue_{k,h}(s,a)\big(\pi^*_{h}(a|s)-\pi^{k+1}_{h}(a|s)\big)+\sum_{a\in \cA} \qvalue_{k,h}(s,a)\big(\pi^{k+1}_{h}(a|s)-\pi^k_{h}(a|s)\big)\notag\\
    &\leq \sum_{a\in \cA} \qvalue_{k,h}(s,a)\big(\pi^*_{h}(a|s)-\pi^{k+1}_{h}(a|s)\big)+ H\big\|\pi^{k+1}_{h}(\cdot|s)-\pi^k_{h}(\cdot|s)\big\|_1\notag\\
    &=\alpha^{-1}\Big(D_{KL}\big(\pi_h^*(\cdot|s)\|\pi_h^{k+1}(\cdot|s)\big)-D_{KL}\big(\pi_h^*(\cdot|s)\|\pi_h^{k}(\cdot|s)\big)-D_{KL}\big(\pi_h^{k+1}(\cdot|s)\|\pi_h^{k}(\cdot|s)\big)\Big)\notag\\
    &\qquad +H\big\|\pi^{k+1}_{h}(\cdot|s)-\pi^k_{h}(\cdot|s)\big\|_1\notag\\
    &\leq \alpha^{-1}\Big(D_{KL}\big(\pi_h^*(\cdot|s)\|\pi_h^{k+1}(\cdot|s)\big)-D_{KL}\big(\pi_h^*(\cdot|s)\|\pi_h^{k}(\cdot|s)\big)\Big)\notag\\
    &\qquad +H\big\|\pi^{k+1}_{h}(\cdot|s)-\pi^k_{h}(\cdot|s)\big\|_1-\frac{\big\|\pi^{k+1}_{h}(\cdot|s)-\pi^k_{h}(\cdot|s)\big\|_1^2}{2\alpha}\notag\\
    &\leq \frac{\alpha H^2}{2}+\alpha^{-1}\Big(D_{KL}\big(\pi_h^*(\cdot|s_h)\|\pi_h^k(\cdot|s_h)\big)-D_{KL}\big(\pi_h^*(\cdot|s_h)\|\pi_h^{k+1}(\cdot|s_h)\big)\Big),
\end{align}
where the first inequality holds due to the fact that $0\leq \qvalue_{k,h}^{\pi^k}(s,a)\leq \qvalue_{k,h}(s,a)\leq H$, the second inequality holds due to Pinsker’s inequality and the last inequality holds due to the fact that $ax-bx^2\leq a^2/4b$. Therefore, we finish the proof.

\end{proof}
\subsection{Proof of Lemma \ref{LEMMA:TRANSITION}}
\begin{proof}[Proof of Lemma \ref{LEMMA:TRANSITION}]
\begin{align}
    &\qvalue_{k,h}(s_h^k,a_h^k)-\qvalue^{\pi^k}_{k,h}(s_h^k,a_h^k)\notag
    \\&=\Big[\reward_h^k(s_h^k,a_h^k)+\big\la\hat{\btheta}_{k,h},\bphi_{\vvalue_{k,h+1}}(s_h^k,a_h^k)\big\ra+\hat{\beta}_k\big\|\hat{\bSigma}_{k,h}^{-1/2}\bphi_{\vvalue_{k,h+1}}(s_h^k,a_h^k)\big\|_2\Big]_{[0,H-h+1]}\notag\\
    &\qquad -  \reward_h^k(s_h^k,a_h^k) - [\PP_h\vvalue_{k,h}^{\pi^k}](s_h^k,a_h^k)\notag\\
    &\leq \Big|\big\la\hat{\btheta}_{k,h},\bphi_{\vvalue_{k,h+1}}(s_h^k,a_h^k)\big\ra+\hat{\beta}_k\big\|\hat{\bSigma}_{k,h}^{-1/2}\bphi_{\vvalue_{k,h+1}}(s_h^k,a_h^k)\big\|_2 \Big|-[\PP_h\vvalue_{k,h}^{\pi^k}](s_h^k,a_h^k)\notag\\
    &\leq \big[\PP_h \vvalue_{k,h+1}\big](s_h^k,a_h^k)+\Big|\big\la\hat{\btheta}_{k,h}-\btheta_h,\bphi_{\vvalue_{k,h+1}}(s_h^k,a_h^k)\big\ra\Big|\notag\\
    &\qquad +\hat{\beta}_k\big\|\hat{\bSigma}_{k,h}^{-1/2}\bphi_{\vvalue_{k,h+1}}(s_h^k,a_h^k)\big\|_2-\big[\PP_h\vvalue_{k,h+1}^{\pi^k}\big](s_h^k,a_h^k)\notag\\
    &\leq \big[\PP_h(\vvalue_{k,h+1}-\vvalue_{k,h+1}^{\pi^k})\big](s_h^k,a_h^k) +2\hat{\beta}_k\big\|\hat{\bSigma}_{k,h}^{-1/2}\bphi_{\vvalue_{k,h+1}}(s_h^k,a_h^k)\big\|_2\notag\\
     &= \big[\PP_h(\vvalue_{k,h+1}-\vvalue_{k,h+1}^{\pi^k})\big](s_h^k,a_h^k) +2\hat{\beta}_k\bar\sigma_{k,h}\big\|\hat{\bSigma}_{k,h}^{-1/2}\bphi_{\vvalue_{k,h+1}}(s_h^k,a_h^k)/\bar\sigma_{k,h}\big\|_2,\label{eq:41}
\end{align}
where the first inequality holds due to the fact that $x_{[0,z]}-y\leq |x-y|$ when $y\ge 0$, the second inequality holds due to the fact that $|x+y+z|\leq |x|+|y|+|z|$ and the third inequality holds due to event $\cE$. Furthermore, we have
\begin{align}
    \qvalue_{k,h}(s_h^k,a_h^k)-\qvalue^{\pi^k}_{k,h}(s_h^k,a_h^k)\leq H-\qvalue^{\pi^k}_{k,h}\leq H\leq 2\hat{\beta}_k\bar\sigma_{k,h},\label{eq:42}
\end{align}
where the first inequality holds due to $\qvalue_{k,h}(s_h^k,a_h^k)\leq H$, the second inequality holds due to $\qvalue^{\pi^k}_{k,h}(s_h^k,a_h^k)\ge 0$ and the last inequality holds due to $2\hat{\beta}_k\bar\sigma_{k,h}\ge \sqrt{d}H/\sqrt{d}=H$. Combined \eqref{eq:41} and \eqref{eq:42}, we have
\begin{align}
    \qvalue_{k,h}(s_h^k,a_h^k)-\qvalue^{\pi^k}_{k,h}(s_h^k,a_h^k)&\leq \big[\PP_h(\vvalue_{k,h+1}-\vvalue_{k,h+1}^{\pi^k})\big](s_h^k,a_h^k) \notag\\
    &\qquad+2\hat{\beta}_k \bar\sigma_{k,h}\min \Big\{\big\|\hat{\bSigma}_{k,h}^{-1/2}\bphi_{\vvalue_{k,h+1}}(s_h^k,a_h^k)/\bar\sigma_{k,h}\big\|_2,1\Big\}.\notag
\end{align}
Therefore, we finish the proof.
\end{proof}
\subsection{Proof of Lemma \ref{LEMMA:TRANSITION2}}
\begin{proof}[Proof of Lemma \ref{LEMMA:TRANSITION2}]
\begin{align}
    &\vvalue_{k,h}(s_h^k)-\vvalue_{k,h}^{\pi^k}(s_h^k) \notag\\
   &=\EE_{a\sim \pi_h^k(\cdot|s_h^k)}\big[\qvalue_{k,h}(s_h^k,a)-\qvalue_{k,h}^{\pi^k}(s_h^k,a)\big]\notag\\
   &=\EE_{a\sim \pi_h^k(\cdot|s_h^k)}\big[\qvalue_{k,h}(s_h^k,a)-\qvalue_{k,h}^{\pi^k}(s_h^k,a)\big]-\big(\qvalue_{k,h}(s_h^k,a_h^k)-\qvalue_{k,h}^{\pi^k}(s_h^k,a_h^k\big)+\qvalue_{k,h}(s_h^k,a_h^k)-\qvalue^{\pi^k}_{k,h}(s_h^k,a_h^k)\notag\\
   &\leq \EE_{a\sim \pi_h^k(\cdot|s_h^k)}\big[\qvalue_{k,h}(s_h^k,a)-\qvalue_{k,h}^{\pi^k}(s_h^k,a)\big]-\big(\qvalue_{k,h}(s_h^k,a_h^k)-\qvalue_{k,h}^{\pi^k}(s_h^k,a_h^k\big)+\big[\PP_h(\vvalue_{k,h+1}-\vvalue_{k,h+1}^{\pi^k})\big](s_h^k,a_h^k)\notag\\
   &\qquad +2\hat{\beta}_k \bar\sigma_{k,h}\min \Big\{\big\|\hat{\bSigma}_{k,h}^{-1/2}\bphi_{\vvalue_{k,h+1}}(s_h^k,a_h^k)/\bar\sigma_{k,h}\big\|_2,1\Big\}\notag\\
   &= \EE_{a\sim \pi_h^k(\cdot|s_h^k)}\big[\qvalue_{k,h}(s_h^k,a)-\qvalue_{k,h}^{\pi^k}(s_h^k,a)\big]-\big(\qvalue_{k,h}(s_h^k,a_h^k)-\qvalue_{k,h}^{\pi^k}(s_h^k,a_h^k\big)\notag\\
   &\qquad+\vvalue_{k,h+1}(s_{h+1}^{k})-\vvalue_{k,h+1}^{\pi^k}(s_{h+1}^{k})+\big[\PP_h(\vvalue_{k,h+1}-\vvalue_{k,h+1}^{\pi^k})\big](s_h^k,a_h^k)-\big(\vvalue_{k,h+1}(s_{h+1}^{k})-\vvalue_{k,h+1}^{\pi^k}(s_{h+1}^{k})\big) \notag\\
   &\qquad +2\hat{\beta}_k \bar\sigma_{k,h}\min \Big\{\big\|\hat{\bSigma}_{k,h}^{-1/2}\bphi_{\vvalue_{k,h+1}}(s_h^k,a_h^k)/\bar\sigma_{k,h}\big\|_2,1\Big\},\label{eq:44}
\end{align}
where the inequality holds due to Lemma \ref{LEMMA:TRANSITION}. 
Furthermore, on the event $\cE$ and $\cE_1$, for all $h\in[H]$, we have
\begin{align}
   &\sum_{k=1}^K\big(\vvalue_{k,h}(s_h^k)-\vvalue_{k,h}^{\pi^k}(s_h^k)\big)\notag\notag\\
   &\leq \sum_{k=1}^K\sum_{h'=h}^{H}2\hat{\beta}_k \bar\sigma_{k,h}\min \Big\{\big\|\hat{\bSigma}_{k,h}^{-1/2}\bphi_{\vvalue_{k,h+1}}(s_h^k,a_h^k)/\bar\sigma_{k,h}\big\|_2,1\Big\}\notag\\
   &\qquad +\sum_{k=1}^K\sum_{h'=h}^{H}\Big(\EE_{a\sim \pi_h^k(\cdot|s_h^k)}\big[\qvalue_{k,h}(s_h^k,a)-\qvalue_{k,h}^{\pi^k}(s_h^k,a)\big]-\big(\qvalue_{k,h}(s_h^k,a_h^k)-\qvalue_{k,h}^{\pi^k}(s_h^k,a_h^k\big)\Big)\notag\\
   &\qquad +\sum_{k=1}^K\sum_{h'=h}^{H}\Big(\big[\PP_h(\vvalue_{k,h+1}-\vvalue_{k,h+1}^{\pi^k})\big](s_h^k,a_h^k)-\big(\vvalue_{k,h+1}(s_{h+1}^{k})-\vvalue_{k,h+1}^{\pi^k}(s_{h+1}^{k})\big)\Big)\notag\\
   &\leq \sum_{k=1}^K\sum_{h'=h}^{H}2\hat{\beta}_k \bar\sigma_{k,h}\min \Big\{\big\|\hat{\bSigma}_{k,h}^{-1/2}\bphi_{\vvalue_{k,h+1}}(s_h^k,a_h^k)/\bar\sigma_{k,h}\big\|_2,1\Big\}+4H\sqrt{T\log(H/\delta)}\notag\\
   &\leq 2\hat{\beta}_K\sum_{k=1}^K\sum_{h'=h}^{H} \bar\sigma_{k,h}\min \Big\{\big\|\hat{\bSigma}_{k,h}^{-1/2}\bphi_{\vvalue_{k,h+1}}(s_h^k,a_h^k)/\bar\sigma_{k,h}\big\|_2,1\Big\}+4H\sqrt{T\log(H/\delta)}\notag\\
   &\leq 2\hat{\beta}_K\sqrt{\sum_{k=1}^K\sum_{h=1}^H\bar\sigma_{k,h}^2}\sqrt{\sum_{k=1}^K\sum_{h=1}^H\min \Big\{\big\|\hat{\bSigma}_{k,h}^{-1/2}\bphi_{\vvalue_{k,h+1}}(s_h^k,a_h^k)/\bar\sigma_{k,h}\big\|_2,1\Big\}}+4H\sqrt{T\log(H/\delta)}\notag\\
   &\leq 2\hat{\beta}_K\sqrt{\sum_{k=1}^K\sum_{h=1}^H\bar\sigma_{k,h}^2}\sqrt{2Hd\log(1+K/\lambda)}+4H\sqrt{T\log(H/\delta)},\label{eq:45}
\end{align}
where the first inequality holds by taking the summation of \eqref{eq:44} for $k\in[K]$ and $h\leq h'\leq H$, the second inequality holds due to the definition of event $\cE_1$, the third inequality holds due to $\hat{\beta}_k\leq \hat{\beta}_K$, the fourth inequality holds due to Cauchy-Schwarz inequality and the last inequality holds due to Lemma \ref{lemma: sum-UCB}. Furthermore, taking the summation of \eqref{eq:45}, we have

\begin{align}
    &\sum_{k=1}^K\sum_{h=1}^H \big[\PP_h(\vvalue_{k,h+1}-\vvalue_{k,h+1}^{\pi^k})\big](s_h^k,a_h^k)\notag\\
    &=\sum_{k=1}^K\sum_{h=1}^H\big(\vvalue_{k,h+1}(s_{h+1}^{k})-\vvalue_{k,h+1}^{\pi^k}(s_{h+1}^{k})\big)\notag\\
    &\qquad+\sum_{k=1}^K\sum_{h=1}^{H}\Big(\big[\PP_h(\vvalue_{k,h+1}-\vvalue_{k,h+1}^{\pi^k})\big](s_h^k,a_h^k)-\big(\vvalue_{k,h+1}(s_{h+1}^{k})-\vvalue_{k,h+1}^{\pi^k}(s_{h+1}^{k})\big)\Big)\notag\\
    &\leq \sum_{k=1}^K\sum_{h=1}^H\big(\vvalue_{k,h+1}(s_{h+1}^{k})-\vvalue_{k,h+1}^{\pi^k}(s_{h+1}^{k})\big)+2H\sqrt{2T\log(1/\delta)}\notag\\
    &\leq 2H\hat{\beta}_K\sqrt{\sum_{k=1}^K\sum_{h=1}^H\bar\sigma_{k,h}^2}\sqrt{2Hd\log(1+K/\lambda)}+4H^2\sqrt{T\log(H/\delta)},\notag
\end{align}
where the first inequality holds due to the definition of event $\cE_2$ and the last inequality holds due \eqref{eq:45}. Therefore, we finish the proof.
\end{proof}

\subsection{Proof of Lemma \ref{LEMMA: TOTAL-ESTIMATE-VARIANCE}}

\begin{proof}[Proof of Lemma \ref{LEMMA: TOTAL-ESTIMATE-VARIANCE}]
On the event $\cE$, by Lemma \ref{LEMMA: CONCENTRATE}, for all $k\in[K],h\in[H]$, we have 
\begin{align}
    [\bar{\VV}_{k,h}\vvalue_{k,h+1}](s_h^k,a_h^k)+E_{k,h}\ge [{\VV}_{h}\vvalue_{k,h+1}](s_h^k,a_h^k)\ge 0.\notag
\end{align} 
Therefore, we have
\begin{align}
    \sum_{k=1}^K\sum_{h=1}^H\bar\sigma_{k,h}^{2} &=\sum_{k=1}^K\sum_{h=1}^H\max\big\{H^2/d,         [\bar{\VV}_{k,h}\vvalue_{k,h+1}](s_h^k,a_h^k)+E_{k,h}  \big\}\notag\\
    &\leq\sum_{k=1}^K\sum_{h=1}^H \frac{H^2}{d}+\sum_{k=1}^K\sum_{h=1}^H [\bar{\VV}_{k,h}\vvalue_{k,h+1}](s_h^k,a_h^k)+\sum_{k=1}^K\sum_{h=1}^HE_{k,h} \notag\\
    &=\frac{H^2T}{d}+\underbrace{\sum_{k=1}^K\sum_{h=1}^H \big([\bar{\VV}_{k,h}\vvalue_{k,h+1}](s_h^k,a_h^k)-[{\VV}_{h}\vvalue_{k,h+1}](s_h^k,a_h^k)\big)}_{I_1}+\underbrace{\sum_{k=1}^K\sum_{h=1}^HE_{k,h}}_{I_2}\notag\\
    &\qquad +\underbrace{\sum_{k=1}^K\sum_{h=1}^H \big([{\VV}_{h}\vvalue_{k,h+1}](s_h^k,a_h^k)-[{\VV}_{h}\vvalue^{\pi^k}_{k,h+1}](s_h^k,a_h^k)\big)}_{I_3}+\underbrace{\sum_{k=1}^K\sum_{h=1}^H [{\VV}_{h}\vvalue^{\pi^k}_{k,h+1}](s_h^k,a_h^k)}_{I_4},\label{eq:61}
\end{align}
where the inequality holds due to the fact that $\max\{a,b\}\leq a+b$, when $a,b\ge 0$.
For the term $I_1$, we have
\begin{align}
    I_1&=\sum_{k=1}^K\sum_{h=1}^H \big([\bar{\VV}_{k,h}\vvalue_{k,h+1}](s_h^k,a_h^k)-[{\VV}_{h}\vvalue_{k,h+1}](s_h^k,a_h^k)\big)\leq \sum_{k=1}^K\sum_{h=1}^H E_{k,h}= I_2,\label{eq:62}
\end{align}
where the inequality holds due to the definition of event $\cE$. For the term $I_2$, we have
\begin{align}
    I_2&=\sum_{k=1}^K\sum_{h=1}^H E_{k,h}\notag\\
    &=\sum_{k=1}^K\sum_{h=1}^H  \min \Big\{\tilde{\beta}_k\big\|\tilde{\bSigma}_{k,h}^{-1/2}\bphi_{\vvalue^2_{k,h+1}}(s_h^k,a_h^k)\big\|_2,H^2\Big\}+\sum_{k=1}^K\sum_{h=1}^H\min \Big\{2H\bar{\beta}_k\big\|\hat{\bSigma}_{k,h}^{-1/2}\bphi_{\vvalue_{k,h+1}}(s_h^k,a_h^k)\big\|_2,H^2\Big\}\notag\\
    &\leq \tilde{\beta}_{K}\sum_{k=1}^K\sum_{h=1}^H\min \Big\{\big\|\tilde{\bSigma}_{k,h}^{-1/2}\bphi_{\vvalue^2_{k,h+1}}(s_h^k,a_h^k)\big\|_2,1\Big\}\notag\\
    &\qquad +2H\bar{\beta}_K\sum_{k=1}^K\sum_{h=1}^H \bar{\sigma}_{k,h}\min \Big\{\big\|\hat{\bSigma}_{k,h}^{-1/2}\bphi_{\vvalue_{k,h+1}}(s_h^k,a_h^k)\big\|_2,1\Big\}\notag\\
    &\leq \tilde{\beta}_{K}\sqrt{T}\sqrt{\sum_{k=1}^K\sum_{h=1}^H\min \Big\{\big\|\tilde{\bSigma}_{k,h}^{-1/2}\bphi_{\vvalue^2_{k,h+1}}(s_h^k,a_h^k)\big\|^2_2,1\Big\}}\notag\\
    &\qquad +2\sqrt{3}H^2\bar{\beta}_K\sqrt{T}\sqrt{\sum_{k=1}^K\sum_{h=1}^H \min \Big\{\big\|\hat{\bSigma}_{k,h}^{-1/2}\bphi_{\vvalue_{k,h+1}}(s_h^k,a_h^k)\big\|_2,1\Big\}}\notag\\
    &\leq  \tilde{\beta}_{K}\sqrt{T} \sqrt{2dH \log\big(1+kH^4/(d\lambda)\big)}+2H^2\bar{\beta}_K\sqrt{3T} \sqrt{2dH\log(1+K/\lambda)},\label{eq:63}
\end{align}
where the first inequality holds on due to $ \tilde{\beta}_{K}\ge  \tilde{\beta}_{k}\ge H^2$ and $\bar{\beta}_K \bar{\sigma}_{k,h}\ge \bar{\beta}_k \bar{\sigma}_{k,h}\ge H$, the second inequality holds due to Cauchy-Schwarz inequality with the fact that $\bar{\sigma}_{k,h}\leq \max\{H^2/d,H^2+2H^2\}\leq 3H^2$ and the last inequality holds due to Lemma \ref{lemma: sum-UCB}. 

On the event $\cE\cap \cE_1\cap\cE_2$, for the term $I_3$, we have  
\begin{align}
    I_3&=\sum_{k=1}^K\sum_{h=1}^H \big([{\VV}_{h}\vvalue_{k,h+1}](s_h^k,a_h^k)-[{\VV}_{h}\vvalue^{\pi^k}_{k,h+1}](s_h^k,a_h^k)\big)\notag\\
    &=\sum_{k=1}^K\sum_{h=1}^H \Big([\PP_h \vvalue_{k,h+1}^2](s_h^k,a_h^k)-\big([\PP_h \vvalue_{k,h+1}](s_h^k,a_h^k)\big)^2
    -[\PP_h (\vvalue^{\pi^k}_{k,h+1})^2](s_h^k,a_h^k)+\big([\PP_h \vvalue^{\pi^k}_{k,h+1}](s_h^k,a_h^k)\big)^2\Big)\notag\\
    &\leq \sum_{k=1}^K\sum_{h=1}^H\big([\PP_h \vvalue_{k,h+1}^2](s_h^k,a_h^k)-[\PP_h (\vvalue^{\pi^k}_{k,h+1})^2](s_h^k,a_h^k)\big)\notag\\
    &\leq 2H \sum_{k=1}^K\sum_{h=1}^H\big([\PP_h \vvalue_{k,h+1}](s_h^k,a_h^k)-[\PP_h \vvalue^{\pi^k}_{k,h+1}](s_h^k,a_h^k)\big)\notag\\
    &\leq 4H^2\hat{\beta}_K\sqrt{\sum_{k=1}^K\sum_{h=1}^H\bar\sigma_{k,h}^2}\sqrt{2Hd\log(1+K/\lambda)}+8H^3\sqrt{T\log (H/\delta)},\label{eq:64}
\end{align}
where the first inequality holds due to the fact that $\vvalue^{\pi^k}_{k,h+1}(s')\leq \vvalue_{k,h+1}(s')$, the second inequality holds due to $0\leq \vvalue_{k,h+1}(s'),\vvalue^{\pi^k}_{k,h+1}(s') \leq H$ and the last inequality holds due to Lemma \ref{LEMMA:TRANSITION2}.

On the event $\cE_3$, for the term $I_4$, we have
\begin{align}
    I_4=\sum_{k=1}^K\sum_{h=1}^H [{\VV}_{h}\vvalue^{\pi^k}_{k,h+1}](s_h^k,a_h^k)\leq 3\big(HT+H^3\log(1/\delta)\big).\label{eq:65}
\end{align}
Substituting \eqref{eq:62}, \eqref{eq:63}, \eqref{eq:64} and \eqref{eq:65} into \eqref{eq:61}, we have
\begin{align}
    \sum_{k=1}^K\sum_{h=1}^H\bar\sigma_{k,h}^{-2}&\leq H^2T/d+ 3\big(HT+H^3\log(1/\delta)\big)\notag\\
    &\qquad + 2\tilde{\beta}_{K}\sqrt{T} \sqrt{2dH \log\big(1+KH^4/(d\lambda)\big)}+4H^2\bar{\beta}_K\sqrt{3T} \sqrt{2dH\log(1+K/\lambda)}\notag\\
    &\qquad +4H^2\hat{\beta}_K\sqrt{\sum_{k=1}^K\sum_{h=1}^H\bar\sigma_{k,h}^2}\sqrt{2Hd\log(1+K/\lambda)}+8H^3\sqrt{T\log (H/\delta)},\notag
\end{align}
where 
\begin{align}
    \bar{\beta}_K&=8d\sqrt{\log(1+K/\lambda)\log(4K^2H/\delta)}+4\sqrt{d}\log(4K^2H/\delta)+\sqrt{\lambda}B,\notag\\
   \hat{\beta_K}&= 8\sqrt{d\log(1+K/\lambda)\log(4K^2H/\delta)}+4\sqrt{d}\log(4K^2H/\delta)+\sqrt{\lambda}B,\notag\\
   \tilde{\beta}_K&=8H^2\sqrt{d\log\big(1+KH^4/(d\lambda)\big)\log(4K^2H/\delta)}+4H^2\log(4K^2H/\delta)+\sqrt{\lambda}B.\notag
\end{align}
Therefore, by the fact that $x\leq a\sqrt{x}+b$ implies $x\leq a^2+2b$, we have
\begin{align}
    \sum_{k=1}^K\sum_{h=1}^H\bar\sigma_{k,h}^{-2}&\leq 2H^2T/d+ 6\big(HT+H^3\log(1/\delta)\big)\notag\\
    &\qquad + 4\tilde{\beta}_{K}\sqrt{T} \sqrt{2dH \log\big(1+KH^4/(d\lambda)\big)}+8H^2\bar{\beta}_K\sqrt{3T} \sqrt{2dH\log(1+K/\lambda)}\notag\\
    &\qquad +32H^5d(\hat{\beta}_K)^2 \log(1+K/\lambda)+16H^3\sqrt{T\log (H/\delta)}\notag\\
    &\leq 2H^2T/d+6\big(HT+H^3\log(1/\delta)\big)+330\sqrt{d^3H^5T}\log(4K^2H/\delta)\log(1+KH^4/\lambda)\notag\\
    &\qquad+2048H^5d^2 \log^2(4K^2H/\delta)\log^2(1+K/\lambda)+16H^3\sqrt{T\log (H/\delta)}\notag\\
    &\leq 2HT/d+179HT+165d^3H^4 \log^2(4K^2H/\delta)\log^2(1+KH^4/\lambda)\notag\\
    &\qquad +2062H^5d^2 \log^2(4K^2H/\delta)\log^2(1+K/\lambda),\notag
\end{align}
where the second inequality holds due to the definition of parameter $\bar{\beta}_K,\hat{\beta_K},\tilde{\beta}_K$ with the fact that $\lambda=1/B^2\leq 1$ and the third inequality holds due to Young's inequality. Therefore, we finish the proof.
\end{proof}


\end{document}